\newtheorem{theo}{Theorem}
\newtheorem{lem}{Lemma}
\newtheorem{defi}{Definition}
\begin{document}

\title{FRITL: A Hybrid Method for Causal Discovery in the Presence of Latent Confounders}

\author{\name Wei Chen \email chenweiDelight@gmail.com  \\
\addr School of Computer Science, Guangdong University of Technology, Guangzhou, 510006, China 
\AND
       \name Kun Zhang \textsuperscript{*} \email kunz1@cmu.edu \\
       \addr Department of Philosophy, Carnegie Mellon University, Pittsburgh, PA 15213, USA
      \AND
       \name Ruichu Cai \textsuperscript{*} \email cairuichu@gmail.com \\
       \addr School of Computer Science, Guangdong University of Technology, Guangzhou, 510006, China\\
       Pazhou Lab, Guangzhou, 510330, China
       \AND
       \name Biwei Huang \email  biweih@andrew.cmu.edu \\
       \addr Department of Philosophy, Carnegie Mellon University, Pittsburgh, PA 15213, USA
       \AND
       \name Joseph Ramsey \email   jdramsey@andrew.cmu.edu \\
       \addr Department of Philosophy, Carnegie Mellon University, Pittsburgh, PA 15213, USA
       \AND
       \name Zhifeng Hao \email   zfhao@gdut.edu.cn \\ 
       \addr School of Mathematics and Big Data, Foshan University, Foshan, 528000, China 
       \AND
       \name Clark Glymour \email cg09@andrew.cmu.edu \\
       \addr Department of Philosophy, Carnegie Mellon University, Pittsburgh, PA 15213, USA}

\editor{ }

\maketitle

\begin{abstract}
We consider the problem of estimating a particular type of linear non-Gaussian model. Without resorting to the overcomplete Independent Component Analysis (ICA), we show that under some mild assumptions, the model is uniquely identified by a hybrid method. Our method leverages the advantages of constraint-based methods and independent noise-based methods to handle both confounded and unconfounded situations. The first step of our method uses the FCI procedure, which allows confounders and is able to produce asymptotically correct results. The results, unfortunately, usually determine very few unconfounded direct causal relations, because whenever it is possible to have a confounder, it will indicate it. The second step of our procedure finds the unconfounded causal edges between observed variables among only those adjacent pairs informed by the FCI results. By making use of the so-called Triad condition, the third step is able to find confounders and their causal relations with other variables. Afterward, we apply ICA on a notably smaller set of graphs to identify remaining causal relationships if needed. Extensive experiments on simulated data and real-world data validate the correctness and effectiveness of the proposed method.
\end{abstract}

\begin{keywords}
  causal discovery, latent confounder, linear non-Gaussian acyclic model, independence noise condition, constraint-based method
\end{keywords}

\section{Introduction}

Causal discovery is crucial for understanding the actual mechanism underlying events in fields such as neuroscience \cite{sanchez2019estimating}, biology \cite{sachs2005causal} and social networks \cite{cai2016understanding}. In such areas, the aim of the inquiry is to discover causal relations among variables that are measured only indirectly. Unmeasured variables and their influence on measured variables are unknown prior to the inquiry. Various methods for discovering the causal structure from observed samples have been proposed. However, most of them assume that the system of variables is causal sufficient, which means no pairs of variables have an unmeasured common cause (also called a latent confounder) \cite{spirtes2001causation}. Real applications typically violate this assumption. For example, some variables might not be measured because of limitations in data collection, and other variables may not even be considered in the data collection design. Without considering the presence of latent confounders, these algorithms return some false causal relations. Thus, developing a causal discovery method in the presence of latent confounders is an important research topic.

Methods for finding latent confounders and their relationships began early in the 20th century in factor analysis and its applications. In the case of continuous variables, linear relationships among variables are widely used as the data-generation assumption in searches for structural equation models (SEMs). Recently SEMs have begun to employ non-Gaussian additive (unmeasured) disturbances for each variable. The LvLiNGAM \cite{hoyer2008estimation} algorithm, which uses overcomplete Independent Components Analysis (ICA) \cite{eriksson2004identifiability} \cite{lewicki2000learning}, has been proposed to estimate the causal relations among measured variables in systems with linearly related variables. Given the number of latent confounders and appropriate data, it can in principle identify the measured variables sharing a common cause or causes, as well as the causal relations between measured variables, but it requires latent confounders to be mutually independent. This independence is impractical when the number of variables is large. The algorithm easily falls into local optima, which produces estimation errors aggravated by high-dimensional data. The ParceLiNGAM \cite{tashiro2014parcelingam} and PairwiseLvLiNGAM \cite{entner2010discovering} methods have been proposed for the same model class, but these methods fail to identify the causal structure given in Fig. \ref{figure.exp1}. Existing independence noise-based methods have a high computational load and do not fully identify the causal structure. 

\begin{figure}[ht]
	\centering
	\includegraphics[width=0.18\textwidth]{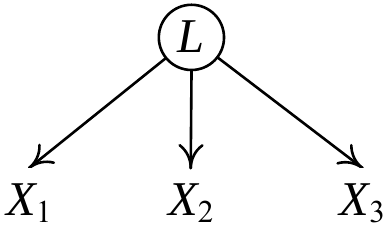} 
	\caption{An example of a causal graph, where $X_1$, $X_2$ and $X_3$ are observed variables, and $L$ is a latent confounder.} \label{figure.exp1}
\end{figure}

Constraint-based methods such as the Fast Causal Inference (FCI) algorithm \cite{spirtes2001causation} is another type of methods for recovering causal structures. Although the results of the FCI algorithm are statistically consistent but provide limited information. For example, even when no confounders exist, FCI usually provides too few directed, unconfounded causal relationships; on the other hand, for a small number of variable pairs, hidden variables usually can not be found. As a specific example, consider the data generated according to the Directed Acyclic Graph (DAG) shown in Figure \ref{figure.pag}(a). The FCI output, called Partial Ancestral Graph (PAG), is given as Figure \ref{figure.pag}(b). The adjacency and arrowheads in Figure \ref{figure.pag}(b) are mostly correct, but some undetermined tails of edges remain.

From these observations, we propose a hybrid method assuming linearity and non-Gaussianity, to take advantages of both constraint-based methods and independent noise-based methods to handle both confounded and unconfounded situations. However, designing such a solution is a non-trivial task due to the two specific challenges raised by the high dimensionality of the measured variables and the latent confounders. One is how to efficiently decompose a large global graph into local small structures without introducing new latent confounders. The second is how to recover local structures accurately in the presence of latent confounders. To address these challenges, we first employ FCI to remove some independent causal relationships. This output will not be complete, in the sense that it contains many undetermined causal edges when latent confounders might not exist. We further refine this output to examine unconfounded causal edges and locate the latent confounders by applying an independent noise-based method among only those adjacent pairs informed by the FCI result. The Triad condition \cite{cai2019Triad} identifies some shared latent confounders and the causal relations between measured variables. If some causal directions are still undetermined, we apply overcomplete ICA locally to refine the causal structure.

\begin{figure}[ht]
	\centering
	\includegraphics[width=0.6\textwidth]{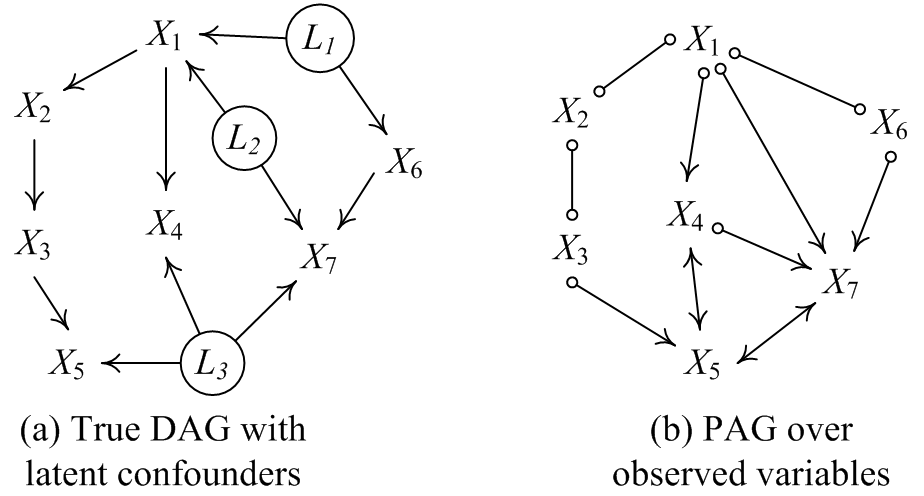} 
	\caption{An example for graphs of observed variables $X_{1},X_{2},\dots, X_{7}$ and latent confounders $L_{1}$, $L_{2}$ and $L_{3}$: (a) the original directed acyclic graph (DAG), and (b) the corresponding PAG produced by FCI.} \label{figure.pag}
\end{figure}

We summarize our contributions as follows:
\begin{itemize}
    \item [1)]  We propose a hybrid framework to reconstruct the entire causal structure from measured data, handling both confounded and unconfounded situations. 
    \item [2)] We show the completeness result of our proposed method, demonstrating the correctness of our method on the theoretical side; 
    \item [3)] We verify the correctness and effectiveness of our method on simulated and real-world data, showing results to be mostly consistent with the background knowledge.
\end{itemize}


\section{Graphical Models}
We employ two types of graphical representations of causal relations: Directed Acyclic Graphs (DAGs) and Partial Ancestral Graphs (PAGs).

\subsection{DAG Description}
A DAG can be used to represent both causal and independence relationships. A DAG contains a set of vertices and a set of directed edges ($\to$), where each vertex represents one random variable. $X_{i} \to X_{j}$ means that $X_{i}$ is a ``direct" cause (or parent) of $X_{j}$, that is, $X_{j}$ is a direct effect (or child) of $X_{i}$. Figure \ref{figure.pag}(a) shows an example of a DAG $\mathcal{G}$. In Figure \ref{figure.pag}(a), $X_{1}$ is a parent of $X_{2}$, or $X_{2}$ is a child of $X_{1}$, due to the edge $X_{1} \to X_{2}$. Two vertices $(X_{i}, X_{j})$ are adjacent if there is a directed edge $X_{i} \to X_{j}$ or $X_{j}\to X_{i}$. A directed path from $X_{i}$ to $X_{j}$ is a sequence of vertices beginning with $X_{i}$ and ending with $X_{j}$ such that each vertex in the sequence is a child of its predecessor in the sequence. Any sequence of vertices in which each vertex is adjacent to its predecessor is an undirected path. A vertex, in a path $X_{i}$ is a collider if $X_{i}$ is a child of both its predecessor and its successor in the path. 

\textbf{d-separation \cite{pearl1988probabilistic}.} Let $\mathbf{X}$ be a set of variables in DAG $\mathcal{G}$ that does not have either of $X_{i}$ and $X_{j}$ as members. $X_{i}$ and $X_{j}$ are d-separated given $\mathbf{X}$ if and only if there exists no undirected path $U$ between $X_{i}$ and $X_{j}$, such that both of the following conditions hold:

(i) every collider on $U$ has a descendent (or itself) in $\mathbf{X}$;

(ii) no variable on U that is not a collider is in $\mathbf{X}$.

Two variables that are not d-separated by $\mathbf{X}$ are said to be d-connected given $\mathbf{X}$.

\subsection{PAG Description}

 A PAG contains four different types of edges between two variables: directed edge ($\to$), bidirected edge ($\leftrightarrow$), partially directed edge ($\circ\!\!\to$), and nondirected edge ($\circ\!\!-\!\!\circ$). A directed edge $X_{i} \to X_{j}$ means that $X_{i}$ is a cause of $X_{j}$. A bidirected edge $X_{i} \leftrightarrow X_{j}$ indicates that there is a latent confounder that is a common cause of $X_{i}$ and $X_{j}$. A partially directed edge $X_{i} \circ\!\!\to X_{j}$ indicates that either $X_{i}$ is a cause of $X_{j}$, or there is an unmeasured variable influencing $X_{i}$ and $X_{j}$, or both. A nondirected edge $X_{i} \circ\!\! -\!\!\circ X_{j}$ means exactly one of the following holds: (a) $X_{i}$ is a cause of $X_{j}$; (b) $X_{j}$ is a cause of $X_{i}$; (c) there is an unmeasured variable influencing $X_{i}$ and $X_{j}$; (d) both a and c; or (e) both b and c. In a PAG, the end marks of some edges may be undetermined, i.e., the undetermined edge is an edge other than the directed edge.
 
 
Figure \ref{figure.pag}(b) shows a PAG representing the set of all DAGs that imply the same conditional independence relations \textit{among the measured variables} as does the DAG $\mathcal{G}$ (Figure \ref{figure.pag}(a)). For example, the bidirected edge between $X_{4}$ and $X_{5}$ means that there is a latent confounder influencing $X_{4}$ and $X_{5}$. The non-directed edge between $X_{1}$ and $X_{2}$ shows a class of causal relation between $X_{1}$ and $X_{2}$, that is, this edge might be: $X_{1} \to X_{2}$, $X_{1} \gets X_{2}$, $X_{1} \leftrightarrow X_{2}$. 
 
A PAG can be estimated by the FCI algorithm \cite{spirtes2001causation} or one of its variants such as RFCI \cite{colombo2012learning}, FCI$^{+}$ \cite{claassen2013learning}, FCI-stable \cite{colombo2014order}, Conservative FCI (CFCI) \cite{ramsey2006adjacency} or Greedy FCI (GFCI) \cite{ogarrio2016hybrid}.

\section{Problem Definition}

To help with the definition of the scope of our solution, we assume that all samples are infinite, independent distributed, following the same joint probability distribution $P$. Further, we make some or all of the following assumptions according to context.

$\mathbf{A1}$. \textbf{Causal Markov Assumption}. Two variables $X_{i}$ and $X_{j}$ are independent given a subset $\mathbf{S}$ of variables not containing $X_{i}$ and $X_{j}$, if $X_{i}$ and $X_{j}$ are d-separated given $\mathbf{S}$.  

$\mathbf{A2}$. \textbf{Causal Faithfulness Assumption}. Let $\mathbf{S} \bigcap \{X_{i}, X_{j} \} = \emptyset$. If $X_{i}$ and $X_{j}$ are independent conditional on $\mathbf{S}$ in $P$, then $X_{i}$ is d-separated from $X_{j}$ conditional on $\mathbf{S}$ in $\mathcal{G}$.

We assume the target to be discovered is a DAG, represented as a linear non-Gaussian model with latent confounders (named as LvLiNGAM), as defined by Hoyer et al. \cite{hoyer2008estimation}, in which each measured variable $X_{i}$ in $\mathbf{X}$, $i=1,2,\dots,n$, is generated from its parents including measured variables and latent confounders $\mathbf{L}$ with an additive noise term. The matrix form of LvLiNGAM then can be formalized as
\begin{equation}
\label{eq:model}
\mathbf{X}=\mathbf{BX}+\Lambda \mathbf{L}+\mathbf{E},
\end{equation}
where $\mathbf{B}$ is the matrix of causal strengths among measured variables, $\Lambda$ is the matrix of causal influences of the latent confounders $\mathbf{L}$ on measured variables, and the noise terms, as components of $\mathbf{E}$, are mutually independent and non-Gaussian. According to lvLiNGAM by Hoyer et al. \cite{hoyer2008estimation}, we know that $\mathbf{B}$ can be permutated to be a lower triangular matrix, and Equation \ref{eq:model} can be changed to
\begin{equation}
\begin{aligned}
&\mathbf{X} =\mathbf{BX}+\Lambda \mathbf{L}+\mathbf{E},\\
\Rightarrow & \mathbf{X} = (\mathbf{I-B})^{-1}\Lambda \mathbf{L}+ (\mathbf{I-B})^{-1}\mathbf{E},\\
\Rightarrow & \mathbf{X} = [\begin{array}{c}
(\mathbf{I-B})^{-1}\Lambda \mid \mathbf{I} 
\end{array}] \cdot \left[\begin{array}{c}
\mathbf{L}\\ \mathbf{E}
\end{array} \right]= \mathbf{A} \cdot \left[\begin{array}{c}
\mathbf{L}\\ \mathbf{E}
\end{array} \right],
\end{aligned}
\label{eq:model2}
\end{equation}
where $\mathbf{A} := [\begin{array}{c}
(\mathbf{I-B})^{-1}\Lambda \mid \mathbf{I} 
\end{array}] $ and $\mathbf{I}$ denotes the identity matrix.

Based on Equation \ref{eq:model}, we make the following further assumptions:

$\mathbf{A3}$. \textbf{Linear Acyclic Non-Gaussianity Assumption}. The causal graph over all variables, including the latent variables, is a directed acyclic graph (DAG), which represents the model in which the causal relations among any variables are linear and all noise terms are non-Gaussian and mutually independent.

$\mathbf{A4}$. \textbf{One Latent Confounder Assumption}. All latent confounders are independent of each other, and each pair of observed variables is directly influenced by at most one latent confounder.

Based on the above assumptions, we define our problem as follows.
\begin{defi} 
   \textbf{(Problem Definition)} Given the observational data generated by causal model as Equation \ref{eq:model}, reconstruct the causal graph over measured variables and latent confounders.
\end{defi}

\section{A Hybrid Method for Causal Discovery in the Presence of Latent Confounders} 

In this section, we describe our approach in detail, and  explaining how it recover the true graph shown in Figure \ref{figure.f1}(a) that represents causal model (\ref{eq:model}). The proposed framework is given in Figure \ref{figure.f1}.

\begin{figure}[ht]
	\centering
	\includegraphics[width=0.9\textwidth]{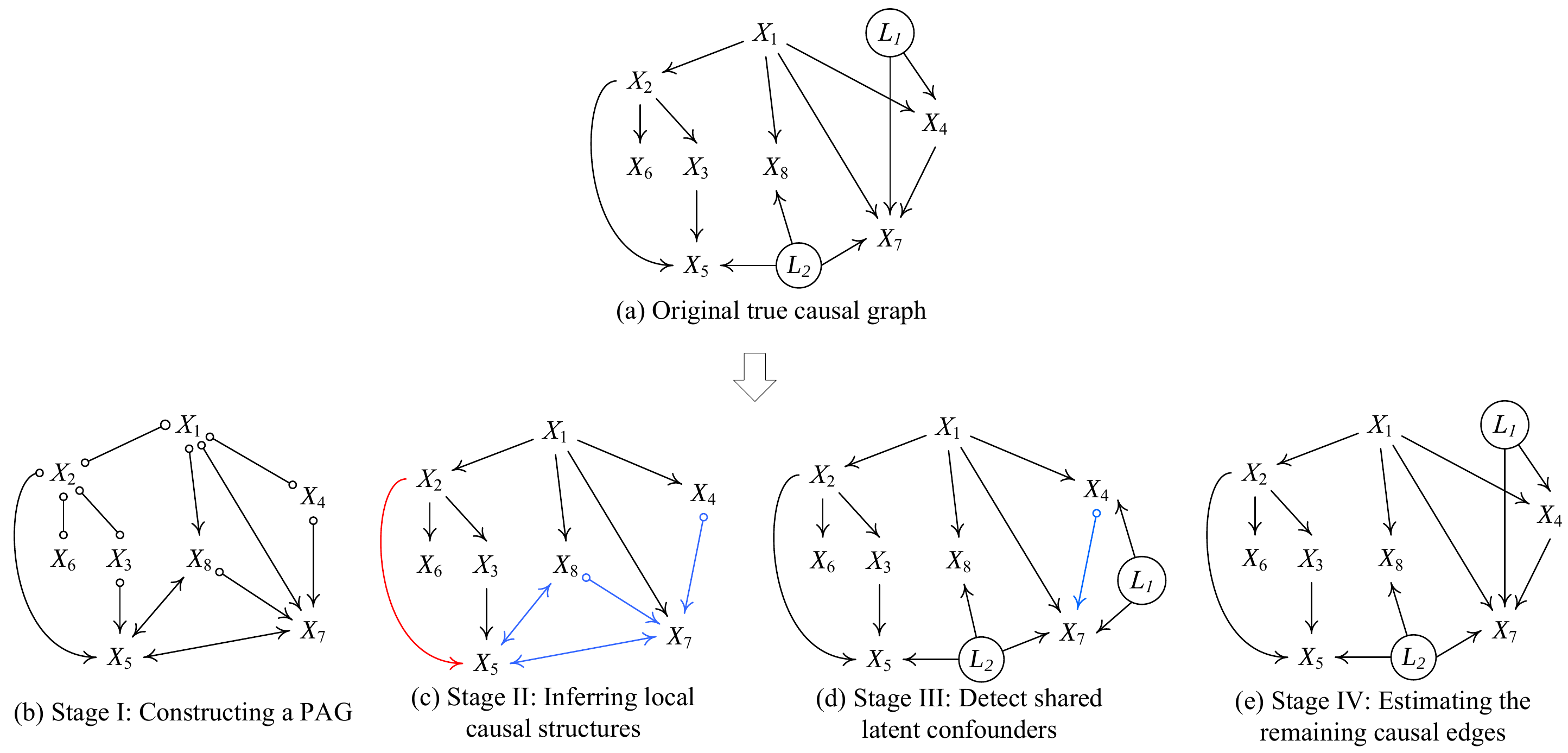} 
	\caption{The proposed framework. In these graphs, $X_{1},X_{2},\dots, X_{8}$ represent the measured variables and $L_{1}$ and $L_{2}$ represent the latent confounders. The red line in (c) means the edge $X_2 \to X_5$ can be updated by FCI orientation rule $\mathcal{R}_8$. Blue lines in (c) and (d) indicate the edges for which end marks are not determined.} \label{figure.f1}
\end{figure}

The idea is as follows. After running the FCI algorithm to obtain a PAG, we further try to orient edges by regression and subsequent independence testing, extrapolating directions by the well-known Meek rules. From regression residuals, we further determine local causal structures for pairs of variables that are adjacent with an undetermined edge in the PAG. We then introduce a constraint condition for triples of variables to detect and combine some latent confounders. Finally, under the further assumption that the latent confounders are independent, we use overcomplete ICA to determine the remaining edges when needed. The pseudo-code of this framework (named FRITL) is described in Algorithm \ref{alg:1}. The use of these four steps can be selected according to the purpose. 


\begin{algorithm}
	\caption{FRITL Algorithm}
	\label{alg:1}
	{\bf Input:} Data $\mathcal{D}$, threshold for independence test $\alpha$ \\
	{\bf Output:} Causal graph $\mathcal{G}_{out}$ over measured variables and latent confounders    
	\begin{algorithmic}		
		\STATE {\textbf{Stage I:} \textit{Construct a PAG} $\mathcal{G}_1$ by running the FCI algorithm on $\mathcal{D}$;}
		\STATE {\textbf{Stage II:} \textit{Infer local causal structures} by using an independence noise condition for undetermined adjacent pairs of variables in $\mathcal{G}_1$; update $\mathcal{G}_1$ to $\mathcal{G}_2$;}
		\STATE {\textbf{Stage III:} \textit{Detect shared latent confounders} by using Triad conditions and update $\mathcal{G}_2$ to $\mathcal{G}_3$;}
		\STATE {\textbf{Stage IV:} \textit{Estimate remaining undetermined local causal structures} in $\mathcal{G}_3$ using overcomplete ICA. Update $\mathcal{G}_3$ to $\mathcal{G}_{out}$.}
	\end{algorithmic}
\end{algorithm}

\subsection{Stage I: Constructing PAG Using FCI}
We begin by supposing that the data generated by causal model (\ref{eq:model}) satisfies assumptions \textbf{A1}-\textbf{A2}. The FCI algorithm outputs a PAG, which represents estimated features of the true causal DAG according to the following theorem \cite{spirtes2001causation} and lemma.

\begin{theo}
	Given the assumptions \textbf{A1}-\textbf{A2}, the FCI algorithm outputs a PAG that represents a class of graphs including the true causal DAG.
\end{theo}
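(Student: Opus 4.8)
The plan is to reduce the statement to the known soundness of the FCI adjacency and orientation phases under the Markov and faithfulness assumptions, and the cleanest route is through the ancestral-graph characterization of FCI's output. First I would make the bridge between the distribution $P$ and the graph $\mathcal{G}$ explicit. Assumption \textbf{A1} guarantees that every d-separation in $\mathcal{G}$ produces a conditional independence in $P$, while \textbf{A2} supplies the converse, so that the conditional independencies read off from $P$ coincide exactly with the d-separation relations of the true DAG over all variables, observed and latent. The only information FCI can access, however, is the independence structure restricted to the measured variables $\mathbf{X}$; the natural object recording this marginal structure is the maximal ancestral graph $\mathcal{M}$ induced by $\mathcal{G}$ on $\mathbf{X}$, whose separations among observed variables reproduce precisely the d-separations of $\mathcal{G}$. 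Establishing this translation is the conceptual starting point, after which it suffices to argue that FCI recovers the Markov equivalence class of $\mathcal{M}$.

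Next I would treat the two phases of FCI separately. For the adjacency phase, I would argue that two measured variables $X_i$ and $X_j$ end up non-adjacent in the output if and only if some subset $\mathbf{S}$ of the remaining measured variables renders them independent in $P$; by the equivalence above this happens exactly when they are separated in $\mathcal{M}$, i.e. when they are non-adjacent in $\mathcal{M}$. The subtlety here, and the step I expect to be the main obstacle, is that in the presence of latent confounders a correct separating set need not lie among the current neighbors of $X_i$ or $X_j$, so the search must range over the \emph{Possible-D-SEP} sets rather than over adjacencies alone. I would therefore need to verify that this enlarged search is guaranteed to locate a separating set whenever one exists, so that the recovered skeleton asymptotically matches that of $\mathcal{M}$.

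For the orientation phase, I would establish soundness of the FCI orientation rules by induction on the order in which marks are placed. The base case is the unshielded-collider rule: whenever $X_i$ and $X_j$ are each adjacent to a common vertex $X_k$ but not to one another, and $X_k$ lies outside the set that separates them, faithfulness forces $X_k$ to be a collider, so the arrowheads placed into $X_k$ are correct. For the inductive step I would check that each subsequent rule converts a circle mark into an arrowhead or a tail only when that mark is forced in every ancestral graph sharing the already-established separations and colliders, so that no incorrect mark is ever introduced; the circles that survive then faithfully represent genuine ambiguity across the equivalence class.

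Finally I would assemble these pieces. The output is a PAG whose skeleton is that of $\mathcal{M}$ and all of whose determined end marks are invariant over the Markov equivalence class of $\mathcal{M}$. Since $\mathcal{G}$ induces $\mathcal{M}$ by construction, $\mathcal{G}$ is one of the structures represented by this equivalence class, which is exactly the assertion that the PAG represents a class of graphs including the true causal DAG. The heaviest lifting is the completeness of the \emph{Possible-D-SEP} search in the confounded setting together with the case analysis verifying soundness of the orientation rules; both are established in \cite{spirtes2001causation}, so the argument here is chiefly one of correctly invoking and stitching together those components under \textbf{A1}--\textbf{A2}.
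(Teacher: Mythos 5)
Your proposal is correct, and it is essentially the same approach as the paper's: the paper gives no inline proof of this theorem at all, stating it as a known result cited directly from \cite{spirtes2001causation}. Your sketch simply unfolds the standard argument behind that citation (the induced maximal ancestral graph over the measured variables, completeness of the Possible-D-SEP adjacency search, and soundness of the orientation rules), and you correctly defer the heavy lifting to the same reference, so there is no substantive divergence.
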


\begin{lem}
    Given the assumptions \textbf{A1}-\textbf{A2}, if FCI converges to a PAG with a directed edge between $X_{i}$ and $X_{j}$, then there is a directed edge between $X_{i}$ and $X_{j}$ in the true DAG.
\end{lem}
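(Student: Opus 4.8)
The plan is to obtain the lemma as a soundness corollary of Theorem 1 together with the fact that in a PAG every \emph{definite} end mark (any arrowhead or tail, as opposed to a circle) is an invariant feature of the entire represented class. First I would invoke Theorem 1: the PAG to which FCI converges represents a class of DAGs over the measured and latent variables that share the same d-separation relations among the measured variables, and this class contains the true DAG $\mathcal{G}$. Since a non-circle end mark is common to every member of the class, such a mark must in particular agree with the corresponding feature of $\mathcal{G}$; only circles are free to vary across the class.

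Next I would read the hypothesised edge through this lens. A directed edge $X_i \to X_j$ carries a definite tail at $X_i$ and a definite arrowhead at $X_j$, so both marks hold in $\mathcal{G}$. Using the edge semantics of Section 2, the arrowhead at $X_j$ shows that the relation is not reversed ($X_j$ is not a cause of $X_i$), the tail at $X_i$ shows that $X_i$ is a cause of $X_j$, and the fact that the edge is \emph{directed} rather than bidirected or partially directed rules out a latent confounder of the pair---these last two options are exactly the cases that a surviving circle would have permitted. Finally, adjacency of $X_i$ and $X_j$ in the PAG means that no subset $\mathbf{S}$ of the remaining measured variables d-separates them, so by \textbf{A1}--\textbf{A2} no such $\mathbf{S}$ renders them independent.

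It remains to promote this \emph{unconfounded ancestral} relation to a genuine \emph{direct} edge of $\mathcal{G}$, which is the step I expect to be the main obstacle. Here I would exploit the structure of model (\ref{eq:model}): the latent variables enter only through $\Lambda\mathbf{L}$ as exogenous common causes, so they act neither as mediators nor as effects. Consequently, if $\mathcal{G}$ contained no direct edge between $X_i$ and $X_j$ and (as just established) no latent common cause of the pair, then $X_i$ and $X_j$ could be d-separated by a suitable subset of the remaining measured variables---intuitively, one built from the measured variables lying on the directed paths from $X_i$ to $X_j$. This would contradict their adjacency, and hence $\mathcal{G}$ must contain the direct edge $X_i \to X_j$.

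The delicate points in the last step are (i) choosing the conditioning set so that it blocks every directed path through measured mediators without inadvertently opening a collider, and (ii) justifying rigorously that a confounded-but-causal pair cannot receive a definite tail. For (ii) I would argue that whenever a latent common cause coexists with a causal path, the represented class admits a member in which the tail at $X_i$ is replaced by a circle, so the mark could not have been definite; this is precisely what makes ``directed edge'' strictly stronger than ``$X_i$ is a cause of $X_j$'' and delivers the direct, unconfounded edge claimed by the lemma.
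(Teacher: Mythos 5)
The paper itself offers no proof of this lemma: it is stated, together with Theorem~1, as an imported consequence of the FCI correctness results of Spirtes et al., so your proposal has to stand entirely on its own. Its first step (Theorem~1 plus invariance of definite marks) is sound, but it only yields the \emph{ancestral} reading: a tail at $X_i$ means $X_i$ is an ancestor of $X_j$ in every graph of the represented class, and an arrowhead at $X_j$ means $X_j$ is not an ancestor of $X_i$. Your next steps overreach this. The claim that a directed (as opposed to bidirected or partially directed) edge ``rules out a latent confounder of the pair'' is not part of MAG/PAG semantics: edge marks encode only (non-)ancestorship, and excluding confounding requires the additional graphical property of \emph{visibility} (Zhang, 2008), which not every directed edge in a completed PAG possesses. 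Your supporting argument (ii) -- that confounding would let some member of the class carry a circle at $X_i$ -- misreads what a circle is: circles are not marks that members carry but places where members disagree, and the presence of a confounder in the true DAG does not by itself create a disagreeing member, because the class is a class of MAGs and a confounded causal pair can induce exactly the same MAG as an unconfounded one (already for two variables, $X_i \to X_j$ with and without $X_i \leftarrow L \to X_j$ give the identical MAG $X_i \to X_j$).

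The fatal gap is the last step, promoting ``adjacent, ancestral, unconfounded'' to ``direct edge'': PAG adjacency corresponds to an inducing path, not to DAG adjacency, and the separating set you hope to build need not exist. Concretely, take the true DAG $D \to X_i$, $F \to X_i$, $X_i \to C$, $C \to X_j$, $L \to C$, $L \to X_j$, with only $L$ latent. No subset of $\{D,F,C\}$ d-separates $X_i$ and $X_j$: a set omitting $C$ leaves $X_i \to C \to X_j$ open, while a set containing $C$ opens the collider path $X_i \to C \leftarrow L \to X_j$. Hence FCI keeps the $X_i$--$X_j$ edge; it orients the unshielded triple $(D, X_i, F)$ as a collider, and orientation rule $\mathcal{R}1$ applied to $(D, X_i, X_j)$ -- $D$ and $X_j$ are non-adjacent with $\mathrm{Sepset}(D,X_j)=\{X_i\}$ -- then orients $X_i \to X_j$ as a genuine directed edge of the PAG. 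Yet the true DAG has no edge between $X_i$ and $X_j$, and not even a latent confounder of that pair: $X_i$ is merely an indirect cause of $X_j$ through $C$. So your ``delicate point (i)'' is not a technicality but a real obstruction (any blocking set must contain the mediator $C$, which unblocks a collider path into the latent), and no argument can close this gap, since the statement read as literal DAG adjacency is false. What survives, and what your first step already establishes, is the ancestral statement: a directed PAG edge guarantees that $X_i$ causes $X_j$ (possibly indirectly) and that $X_j$ does not cause $X_i$ -- not that the true DAG contains the edge $X_i \to X_j$.
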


We first apply FCI on the data to obtain a PAG. For example, using the graph representing a lvLiNGAM model in Figure \ref{figure.f1}(a), the FCI algorithm outputs the PAG shown in Figure \ref{figure.f1}(b).

\subsection{Stage II: Inferring Local Structures Using Independence Noise Condition}

After running stage I, we obtain the PAG (given in Figure \ref{figure.f1}(b)) that is (asymptotically) correct information of the causal structure but usually provides few direct influences. Although we can apply overcomplete ICA to estimate the true causal graph, the result may suffer from local optima, especially if the number of measured variables is larger than four. In contrast, the ``divide-and-conquer" provides more causal information about the undetermined edges in the graph and only requires performing overcomplete ICA on a small number of variables to estimate the local causal structures. This second stage produces correct, informative causal discovery result with relatively low computational complexity. We note that in the linear non-Gaussian case, unconfounded causal relations can always be determined by regression and independence testing \cite{shimizu2011directlingam}. Inspired by this, we consider generalizing regression and independence test from global causal structure to local causal structure, even when there are latent confounders.

\subsubsection{Identification of causal direction between unconfounded pairs of variables}

We first provide a lemma to identify the causal direction of variables that are not influenced by confounders. Let $\mathcal{G}$ denote the PAG obtained by FCI. From the definition of a PAG, variables connected to the measured variable $X_{i}$ through a directed, nondirected, or partially directed edge are the potential parents of $X_{i}$. For example, $X_{j}$ is a potential parent of $X_{i}$ if $X_{j} \to X_{i}$, $X_{j} \circ\!\! -\!\!\circ X_{i}$, or $X_{j} \circ\!\!\!\to X_{i}$. Let $X_{\mathbf{par}(i)}$ denote the potential parents of $X_{i}$ in $\mathcal{G}$.

If there are no latent or observed confounders of $X_{i}$ and any of $X_{\mathbf{par}(i)}$, we can generalize Lemma 1 proposed by Shimizu et al. \cite{shimizu2011directlingam} to determine local causal structures. We first introduce the Darmois-Skitovitch Theorem \cite{darmois1953analyse}\cite{skitovitch1953property}, which determines whether each potential parent is an actual parent of $X_{i}$.

\begin{theo}\textbf{(Darmois-Skitovitch Theorem).} Define two random variables $X_{1}$ and $X_{2}$, as linear combinations of independent random variables $S_i, i=1, \dots, n$:
	\begin{equation}
	X_{1} = \sum_{i=1}^{n}\alpha_{i}S_{i},
	X_{2} = \sum_{i=1}^{n}\beta_{i}S_{i}.
	\end{equation}
	
	If $X_{1}$ and $X_{2}$ are statistically independent, then all variables $S_j$ for which $\alpha_{j}\beta_{j} \neq 0$ are Gaussian.
	\label{theo.1}
\end{theo}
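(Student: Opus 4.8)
The plan is to move everything to the level of characteristic functions, where independence becomes a multiplicative identity, and then reduce the claim to a Cauchy--Pexider--type functional equation whose only solutions are quadratic. Write $\phi_j$ for the characteristic function of $S_j$. Because the $S_j$ are independent, the joint characteristic function of $(X_1,X_2)$ factorizes as
\[
E\!\left[e^{i(tX_1+sX_2)}\right]=E\!\left[e^{\,i\sum_j (\alpha_j t+\beta_j s)S_j}\right]=\prod_j \phi_j(\alpha_j t+\beta_j s),
\]
while the assumed independence of $X_1$ and $X_2$ gives $E[e^{i(tX_1+sX_2)}]=\big(\prod_j\phi_j(\alpha_j t)\big)\big(\prod_j\phi_j(\beta_j s)\big)$. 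Equating the two expressions yields the identity $\prod_j \phi_j(\alpha_j t+\beta_j s)=\prod_j\phi_j(\alpha_j t)\prod_j\phi_j(\beta_j s)$ for all real $t,s$.

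Next I would linearize. Each $\phi_j$ is continuous with $\phi_j(0)=1$, hence nonzero on a neighborhood of the origin; there I set $\psi_j=\log\phi_j$ (principal branch, $\psi_j(0)=0$), turning the product identity into the additive one
\[
\sum_j \psi_j(\alpha_j t+\beta_j s)=\sum_j \psi_j(\alpha_j t)+\sum_j \psi_j(\beta_j s).
\]
The terms with $\alpha_j=0$ or $\beta_j=0$ cancel between the two sides, so the equation only constrains the indices in $J=\{j:\alpha_j\beta_j\neq0\}$; after merging any proportional pairs $(\alpha_j,\beta_j)$ into a single summand I may assume the surviving directions are pairwise non-proportional. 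Crucially, the right-hand side is a sum of a function of $t$ alone and a function of $s$ alone.

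The heart of the argument is to show that each $\psi_k$, $k\in J$, is a polynomial of degree at most two. Since the right-hand side separates in $t$ and $s$, its mixed second difference vanishes identically; applying the same operator to the left-hand side gives $\sum_{k\in J}\Delta_{u\alpha_k}\Delta_{v\beta_k}\psi_k(\alpha_k t+\beta_k s)=0$. Because the directions $(\alpha_k,\beta_k)$ are pairwise non-proportional, I can first apply a product of first-difference operators chosen to annihilate every term but one --- each increment $(\delta_1,\delta_2)$ is picked so that $\alpha_j\delta_1+\beta_j\delta_2=0$ for exactly the index to be killed --- leaving a single $\psi_k$ equal to a function that still separates in $t$ and $s$; a final mixed difference then forces a higher-order difference of $\psi_k$ to vanish. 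A continuous function with a vanishing difference of some order is locally a polynomial, and substituting back into the equation pins the degree at two. Thus $\phi_k(\tau)=\exp(a\tau+b\tau^2)$ near $0$; since $\phi_k$ is a characteristic function ($|\phi_k|\le1$ and $\phi_k(-\tau)=\overline{\phi_k(\tau)}$), $a$ is purely imaginary and $b\le0$, which is exactly the characteristic function of a (possibly degenerate) normal law. As this expression never vanishes, the logarithm, and hence the quadratic relation, extend by continuation to all of $\mathbb{R}$, so each $S_k$ with $k\in J$ is Gaussian.

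The main obstacle is precisely this functional-equation step: characteristic functions need not be differentiable, so the quadratic conclusion cannot be reached by naive differentiation and must be obtained through the finite-difference bookkeeping above, where the non-proportionality of the directions is exactly what makes the term-by-term isolation possible. A secondary technical point is the globalization from a neighborhood of the origin to all of $\mathbb{R}$, which rests on the fact that a Gaussian characteristic function has no zeros and so keeps the logarithm well defined as the neighborhood is enlarged.
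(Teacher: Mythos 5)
The paper itself offers no proof of this statement: it quotes the Darmois--Skitovitch theorem as a classical result of Darmois (1953) and Skitovitch (1953), so your attempt must be measured against the standard literature proof. Your opening moves do coincide with that proof: factorize the joint characteristic function using independence of the $S_j$, equate it with the product form coming from independence of $X_1$ and $X_2$, pass to logarithms $\psi_j=\log\phi_j$ on a neighborhood of the origin, and use finite differences (rather than derivatives, which need not exist) to deduce from
\[
\sum_j\psi_j(\alpha_j t+\beta_j s)=A(t)+B(s)
\]
that each $\psi_k$ with $\alpha_k\beta_k\neq 0$ is locally a polynomial. Up to that point your bookkeeping is sound and matches the argument in Kagan--Linnik--Rao.

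The genuine gap is the single clause ``substituting back into the equation pins the degree at two.'' It does not. Once there are four or more pairwise non-proportional directions, the functional equation admits polynomial solutions of degree three: with directions $(1,1),(1,-1),(1,2),(1,-2)$ and $\psi_1(u)=\psi_2(u)=-4iu^3$, $\psi_3(u)=\psi_4(u)=iu^3$, one checks
\[
-4i(t+s)^3-4i(t-s)^3+i(t+2s)^3+i(t-2s)^3=-6it^3,
\]
which separates as $A(t)+B(s)$; moreover these $\psi_j$ satisfy every elementary side condition you invoke ($\psi_j(0)=0$, the Hermitian symmetry $\overline{\psi_j(u)}=\psi_j(-u)$, and $\mathrm{Re}\,\psi_j\le 0$, so $|e^{\psi_j}|\le 1$). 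What actually forces the degree down to two is Marcinkiewicz's theorem --- a characteristic function of the form $e^{P}$ with $P$ a polynomial must have $\deg P\le 2$ --- whose proof requires entire-function theory (analytic continuation of $\phi_k$, Hadamard factorization, the ridge property), not substitution back into the equation. This is the deepest ingredient of Darmois--Skitovitch, and your sketch replaces it with an assertion that is provably false. A second, smaller omission: after you merge proportional pairs $(\alpha_j,\beta_j)$ into one summand, proving the merged exponent is quadratic only shows that a sum such as $\alpha_jS_j+\alpha_{j'}S_{j'}$ of independent variables is Gaussian; to conclude that each $S_j$ individually is Gaussian you need Cram\'er's decomposition theorem, another non-elementary result that your argument uses silently.
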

	
In other words, if random variables $S_i, i=1, \dots, n$ are independent and for some $\alpha_{1}, \alpha_{2},\dots, \alpha_{n}$ and $\beta_{1}, \beta_{2},\dots, \beta_{n}$, $X_1$ is independent of $X_2$, then for any $S_{j}$ that is non-Gaussian, at most one of $\alpha_{j}$ and $\beta_{j}$ can be nonzero.

\begin{lem}
	Suppose that the data over variables $\mathbf{X}$ are generated by (\ref{eq:model}) and that assumptions \textbf{A1}-\textbf{A3} hold. Assume there is no latent or observed confounder relative to $X_{i}$ and $X_{j}$ in the underlying true causal graph over all given variables, where $X_{j}$ is one of the potential parents of $X_{i}$ in the FCI output. Let $R_{i,j}$ be the residual of the regression of $X_{j}$ on $X_{i}$. Then in the limit of infinite data, $X_{i}$ is an unconfounded ancestor of $X_{j}$ if and only if $X_{i} \perp\!\!\!\perp R_{j,i}$ and $X_{j} \not\perp\!\!\!\perp R_{i,j}$.
	\label{pro:pair}
\end{lem}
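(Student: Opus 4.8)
The plan is to reduce the statement to the standard bivariate regression-and-independence characterization of causal direction in the linear non-Gaussian setting (Shimizu et al.'s Lemma~1), while adding the bookkeeping needed because $X_i$ and $X_j$ live inside a larger graph containing other observed variables and latent confounders. First I would use Equation~(\ref{eq:model2}) to express every measured variable as a linear combination of the mutually independent, non-Gaussian sources $[\mathbf{L}^{\top},\mathbf{E}^{\top}]^{\top}$. The assumption that there is no latent or observed confounder relative to $X_i$ and $X_j$, together with acyclicity (\textbf{A3}), is precisely what guarantees that $X_i$ and $X_j$ share no common source except through a directed path between them, so every source feeding $X_i$ reaches $X_j$ only by passing through $X_i$. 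Because $X_i$ and $X_j$ are adjacent in the PAG and unconfounded, exactly one of them is an ancestor of the other in the true DAG; the two candidate orientations are therefore mutually exclusive and exhaustive, which is what will let a single forward computation settle the biconditional.

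For the orientation in which $X_i$ is the unconfounded ancestor of $X_j$, I would gather the sources feeding $X_i$ into a set $S_i$ and write $X_j=\rho X_i+r$, where $\rho$ is the total causal effect of $X_i$ on $X_j$ and $r$ is assembled only from sources outside $S_i$. The no-confounder hypothesis forces $r$ to contain none of the sources in $S_i$, hence $r\perp\!\!\!\perp X_i$; being independent it is in particular uncorrelated, so $\rho$ is exactly the least-squares coefficient and $r$ is exactly the residual of regressing $X_j$ on $X_i$. This gives independence of that residual from $X_i$. For the reverse regression of $X_i$ on $X_j$, I would exhibit one non-Gaussian source --- the own-noise term $E_j$ of $X_j$ --- appearing with nonzero coefficient both in $X_j$ and in the reverse residual (nonzero because the edge is genuine, so $\rho\neq0$ and the reverse coefficient does not vanish), and then apply the contrapositive of the Darmois--Skitovitch theorem (Theorem~\ref{theo.1}) to conclude that the reverse residual is dependent on $X_j$. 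Thus the residual of regressing the descendant on the ancestor is independent of the ancestor while the reverse residual stays dependent on its regressor, which is exactly the pair of conditions in the statement; non-Gaussianity (\textbf{A3}) is essential, since it is what turns a shared source into genuine dependence rather than mere correlation.

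For the converse I would argue by exclusion. Since exactly one orientation holds, if $X_i$ were not the unconfounded ancestor of $X_j$ then $X_j$ would be the ancestor of $X_i$; running the forward computation with $i$ and $j$ interchanged would then produce precisely the opposite independence/dependence pattern, contradicting the assumed residual conditions. Hence the assumed pattern forces $X_i$ to be the unconfounded ancestor, completing the biconditional. I expect the main obstacle to be the forward claim that the correct-direction residual is genuinely independent of --- and not merely uncorrelated with --- the regressor: this is exactly the place where the absence of any common source must be invoked to obtain the clean split of the source set, and where one must rule out a Gaussian degeneracy that could make the wrong-direction residual spuriously independent, which is excluded by (\textbf{A3}) through Darmois--Skitovitch. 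A secondary point worth checking is that adjacency in the FCI-produced PAG together with the no-confounder hypothesis really does entail a direct edge in the true DAG, so that for this pair ``ancestor'' and ``direct cause'' coincide.
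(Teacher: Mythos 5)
Your proposal is correct, and it uses the same core machinery as the paper --- express the regression residuals as linear mixtures of the independent non-Gaussian sources and apply the Darmois--Skitovitch theorem --- but the case structure is genuinely different, and in two respects cleaner. First, in the forward direction the paper's proof (its case 1) simplifies by assuming $X_i$ is \emph{exogenous}, writing $X_i = E_i$; you instead group all sources feeding $X_i$ into a set $S_i$ and use the no-confounder hypothesis to argue that these sources reach $X_j$ only through $X_i$, so that $X_j = \rho X_i + r$ with $r$ built from sources outside $S_i$. This handles an unconfounded ancestor $X_i$ that has its own parents, a case the paper's written argument skips over. Second, your converse is by exclusion: under the lemma's hypotheses (adjacency in the PAG plus no confounder) exactly one of the two ancestral orientations holds, so the mirrored forward computation contradicts the assumed residual pattern. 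The paper instead proves the converse by explicitly analyzing the case where $X_i$ and $X_j$ have a common ancestor and showing $X_i \not\perp\!\!\!\perp R_{j,i}$ there --- a case that strictly speaking falls \emph{outside} the lemma's stated no-confounder hypothesis, but which is what the algorithm actually needs, since in practice Lemma \ref{pro:pair} is used to distinguish unconfounded pairs from confounded ones. So your argument is the more faithful proof of the lemma as literally stated, while the paper's buys the extra information (both regressions fail under confounding) that Stage II relies on. One point you flag but should close: exhaustiveness of the two orientations follows in one line --- if neither variable were an ancestor of the other, then with no common cause they would be d-separated by the empty set, hence independent by \textbf{A1}, and FCI would asymptotically have removed the edge, contradicting adjacency in $\mathcal{G}_1$; also note that $\rho\neq 0$ follows because $\rho=0$ would make $X_j = r$ independent of $X_i$, again contradicting adjacency.
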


\begin{proof}
Without loss of generality, all these data are normalized to have zero mean and unit variance.
 
1. Assume that $X_{i}$ is an ancestor of $X_{j}$ and that $X_{i}$ is an exogenous variable, which means that there are no parent or latent confounders for $X_{i}$ and $X_{j}$. $X_{i}$ and $X_{j}$ are generated by (\ref{eq:model}). This leads to
\begin{equation}
	\begin{aligned}
       X_{i} & = E_{i}, \\
       X_{j} & = b_{j,i} X_{i} + E_{j}^{(-i)},
    \end{aligned}
    \label{eq:Xj}
\end{equation}
where $E_{j}^{(-i)} = \sum_{k\in \mathbf{par}(j), k\neq i}b_{j,k}X_{k} + E_{j}$ and $X_{i}$ are independent. 

(1) The residual of regressing $X_{j}$ on $X_{i}$ will be
\begin{equation}
   \begin{aligned}  	
     R_{j,i} & = X_{j} - \frac{\mathrm{Cov}(X_{j},X_{i})}{Var(X_{i})}\cdot X_{i} \\
      & = (b_{j,i} X_{i} + E_{j}^{(-i)}) - b_{j,i}X_{i}\\
      & = E_{j}^{(-i)}.
   \end{aligned}
\end{equation}  

Thus, the residual $R_{j,i}$ is independent of $X_{i}$ because $E_{j}^{(-i)}$ is independent of $X_{i}$. 

(2) If instead we regress $X_{i}$ on $X_{j}$, the residual will be
\begin{equation}
   \begin{aligned}
   	     R_{i,j}  =& X_{i} - \frac{\mathrm{Cov}(X_{i},X_{j})}{Var(X_{j})} \cdot X_{j} \\
       =& E_{i} - \frac{\mathrm{Cov}(X_{i},X_{j})}{Var(X_{j})} \cdot (b_{j,i} X_{i} + E_{j}^{(-i)})\\
       =& (1 - \frac{\mathrm{Cov}(X_{i},X_{j})}{Var(X_{j})}) \cdot E_{i} - \frac{b_{j,i}\mathrm{Cov}(X_{i},X_{j})}{Var(X_{j})} \cdot \sum_{k\in \mathbf{par}(j), k\neq i}b_{j,k}X_{k}\\
      &- \frac{b_{j,i}\mathrm{Cov}(X_{i},X_{j})}{Var(X_{j})} E_{j}.
   \end{aligned}
   \label{eq:wrongR}
\end{equation}  

Each parent of $X_{j}$ is a linear mixture of error terms including $E_{j}$, where all the error terms are mutually independent and non-Gaussian according to assumption \textbf{A3}. Thus, the residual $R_{i,j}$ is a mixture of $E_{i}$, $E_{j}$, and $X_{k}(k\in \mathbf{par}(j), k\neq i)$, where each $X_{k}(k\in \mathbf{par}(j), k\neq i)$ is non-Gaussian. From Equations (\ref{eq:Xj}) and (\ref{eq:wrongR}), the coefficient of $E_{j}$ is non-zero, which implies that $X_{j}$ is dependent of $R_{i,j}$ according to Theorem \ref{theo.1}. Thus, if $X_{i}$ is an ancestor of $X_{j}$, then $X_{i}$ is dependent of $R_{j,i}$ and $X_{j}$ is dependent of $R_{i,j}$.

2. Assume that $X_{i}$ and $X_{j}$ have at least one common ancestor. Let $X_{\mathbf{Pa}_{i}}$ denote all parents of $X_{i}$, and $X_{k}$ be an actual parent of $X_{i}$. Then we have
\begin{equation}
X_{i} = \sum_{k \in \mathbf{Pa}_{i}} b_{i,k} X_{k} + E_{i}.
\label{eq:XiNotCause}
\end{equation}

If we regress $X_{j}$ on $X_{i}$, the residual $R_{j,i}$ will be
\begin{equation}
	\begin{aligned}
	R_{j,i}  =& X_{j} - \frac{Cov(X_{i},X_{j})}{Var(X_{i})} \cdot X_{i}\\
	=& X_{j} - \frac{Cov(X_{i},X_{j})}{Var(X_{i})} \cdot (\sum_{k \in \mathbf{Pa}_{i}} b_{i,k} X_{k} + E_{i})\\
	 =& (1-\frac{b_{i,j}Cov(X_{i},X_{j})}{Var(X_{i})})\cdot X_{j}- \frac{Cov(X_{i},X_{j})}{Var(X_{i})} \cdot \sum_{k \in \mathbf{Pa}_{i},k \neq j} b_{i,k} X_{k} \\
	&- \frac{Cov(X_{i},X_{j})}{Var(X_{i})} \cdot E_{i}.
	\end{aligned} 
	\label{eq:wrongRwithPa}
\end{equation}

Each parent of $X_{i}$ is a linear mixture of error terms other than $E_{i}$, with all the error terms mutually independent and non-Gaussian according to assumption \textbf{A3}. Thus, the residual $R_{j,i}$ can be written as a linear mixture of error terms including $E_{i}$. We can see that the coefficient of $E_{i}$ in Equations (\ref{eq:XiNotCause}) and (\ref{eq:wrongRwithPa}) is nonzero due to $Cov(X_{i},X_{j})\neq 0$, which implies that $X_{i}$ is dependent of $R_{j,i}$ according to Theorem \ref{theo.1}. 
\end{proof}

Lemma \ref{pro:pair} provides a principle to determine the causal direction between a pair of measured variables. If there is no latent or observed confounder for $X_{i}$ and other variables, we can find the ancestors and children of $X_{i}$. In detail, for each variable $X_{j}$ in ${X}_\mathbf{par(i)}$, we regress $X_{i}$ on $X_{j}$ and test whether the residual is independent of $X_{j}$. At the same time, we regress $X_{j}$ on $X_{i}$ and test the independence between the residual and $X_{i}$. Then according to Lemma \ref{pro:pair}, we can determine whether $X_{j}$ is an ancestor or child of $X_{i}$, or whether there is a confounder for them. 

If we have determined some parents or children for measured variable $X_{i}$, we can remove the common cause for two measured variables that are adjacent with the determined causal relationship by regression \cite{shimizu2011directlingam}, and then perform the step as above. This can determine most of the undetermined causal relations that are not influenced by confounders.  

\subsubsection{Identification of causal direction between variables not directly influenced by the same confounder}

After identifying the unconfounded ancestor, some cases where the causal structure between the measured variables cannot be identified because of the indirect latent confounders. They contain two cases:
\begin{enumerate}
    \item The parent and children of the measured variable $X_{i}$ are directly influenced by the same latent confounder $L_{j}$, while $X_{i}$ is not adjacent to (or equivalently, not directly influenced by) $L_{j}$; 
    \item Two or more parents of the measured variable $X_{i}$ are influenced by the same latent confounder $L_{j}$, while $X_{i}$ is not adjacent to $L_{j}$.
\end{enumerate}

\textbf{Case 1:} For the first case, and using Figure \ref{figure.exp2}(a) as an example, $X_1$ and $X_3$ are directly influenced by the hidden common cause $L_1$, but $X_2$ is not. The PAG obtained by Stage I is shown in \ref{figure.exp2}(b). Then for any of the three pairs of the three variables $X_1$, $X_2$, and $X_3$, regression is performed, and the independence of the residuals and the predictor variable is tested. But we can only determine $X_1 \rightarrow X_2$, and cannot identify $X_2-X_3$. If we can remove the indirect cause of $X_2$, then $X_2-X_3$ can be determined. After determining that $X_1 \rightarrow X_2$, we regress $X_2$ on $X_1$ and replace $X_2$ with its corresponding residual $R_{2,1}$. We can find that if the causal relationship between $R_{2,1}$ and $X_3$ also satisfies model (\ref{eq:model}), we can use Lemma \ref{pro:pair} to determine $X_2 \rightarrow X_3$. Next, we generalize Lemma 2 proposed by Shimizu et al. \cite{shimizu2011directlingam} to the latent confounder case and call it Lemma \ref{pro:residualModel}.

\begin{figure}[htbp]
	\centering
	\includegraphics[width=0.5\textwidth]{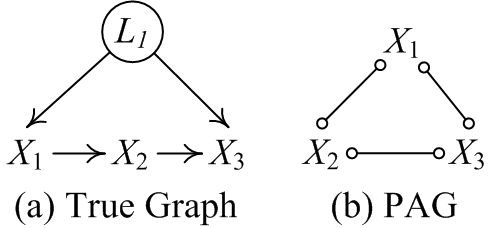} 
	\caption{An example that cannot be identified by Lemma \ref{pro:pair}: (a) a true causal graph; (b) the PAG estimated by Stage I using the data generated according to (a). In these graphs, $X_{1},X_{2}$, and $X_{3}$ represent the measured variables while $L_{1}$ represents a latent confounder.} \label{figure.exp2}
\end{figure}

\begin{lem}
    Assume that the data over measured variables $\mathbf{X}$ follows Model (\ref{eq:model}). Let $\mathbf{X}_{\mathbf{Pa}_{i}}$ denote a set of all found parents of $X_{i}$ ($X_{i} \in \mathbf{X}$) and $\mathbf{R}$ be the result of replacing each $X_{i}$ with its residual from regressing on $\mathbf{X}_{\mathbf{Pa}_{i}}$. Then, an analog of Model (\ref{eq:model}) holds as follows: $\mathbf{R} = \mathbf{B}_{R}\mathbf{R} + \Lambda\mathbf{L} + \mathbf{E}_{R}$, where $\mathbf{B}_{R}$ is a matrix of causal strengths among the residuals that corresponds to the measured variables, $\Lambda$ is a matrix of causal influences of the latent confounders $\mathbf{L}$ on measured variables, and the noise terms in $\mathbf{E}_{R}$ are mutually independent and non-Gaussian. 
    \label{pro:residualModel}
\end{lem}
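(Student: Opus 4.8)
The plan is to work in the matrix (reduced) form of Model~(\ref{eq:model}) and to show that residualization is a linear, invertible transformation that only re-mixes the autoregressive part while leaving the confounder term $\Lambda\mathbf{L}$ and the noise term $\mathbf{E}$ untouched. I would start from $(\mathbf{I}-\mathbf{B})\mathbf{X}=\Lambda\mathbf{L}+\mathbf{E}$ and collect all the regression coefficients into a single matrix $\mathbf{W}$, where row $i$ holds the coefficients obtained by regressing $X_i$ on its found parents $\mathbf{X}_{\mathbf{Pa}_i}$ and zeros elsewhere, so that the stacked residuals are exactly $\mathbf{R}=(\mathbf{I}-\mathbf{W})\mathbf{X}$.

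First I would observe that every found parent is a genuine parent and therefore comes before $X_i$ in any topological (causal) ordering of the underlying DAG; hence, in that order, $\mathbf{W}$ is strictly lower triangular, just as $\mathbf{B}$ is. Consequently $\mathbf{I}-\mathbf{W}$ is lower triangular with unit diagonal and is invertible, with inverse $\sum_{k\ge 0}\mathbf{W}^{k}$ a finite sum since $\mathbf{W}$ is nilpotent, which lets me write $\mathbf{X}=(\mathbf{I}-\mathbf{W})^{-1}\mathbf{R}$. Substituting this into $(\mathbf{I}-\mathbf{B})\mathbf{X}=\Lambda\mathbf{L}+\mathbf{E}$ gives $(\mathbf{I}-\mathbf{B})(\mathbf{I}-\mathbf{W})^{-1}\mathbf{R}=\Lambda\mathbf{L}+\mathbf{E}$. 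I then define $\mathbf{B}_{R}:=\mathbf{I}-(\mathbf{I}-\mathbf{B})(\mathbf{I}-\mathbf{W})^{-1}$ and $\mathbf{E}_{R}:=\mathbf{E}$, and rearranging yields precisely $\mathbf{R}=\mathbf{B}_{R}\mathbf{R}+\Lambda\mathbf{L}+\mathbf{E}_{R}$ with the same $\Lambda$.

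The two structural facts left to verify are that $\mathbf{B}_{R}$ is the causal-strength matrix of an acyclic graph and that $\mathbf{E}_{R}$ meets the noise requirements. The former holds because $(\mathbf{I}-\mathbf{B})$ and $(\mathbf{I}-\mathbf{W})^{-1}$ are both lower triangular with unit diagonal, so their product is too; subtracting it from $\mathbf{I}$ kills the diagonal, leaving $\mathbf{B}_{R}$ strictly lower triangular in the shared causal order, hence acyclic. The latter is immediate: since $\mathbf{E}_{R}=\mathbf{E}$ exactly, the residual noises are the original disturbances, which are mutually independent and non-Gaussian by assumption~\textbf{A3}.

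The main obstacle is not the algebra but making precise the claim that residualization neither introduces a new confounder nor distorts $\Lambda$. I would emphasize that this is forced by the identity $(\mathbf{I}-\mathbf{B}_{R})\mathbf{R}=(\mathbf{I}-\mathbf{B})\mathbf{X}=\Lambda\mathbf{L}+\mathbf{E}$, which holds regardless of whether the fitted coefficients in $\mathbf{W}$ coincide with the true edge weights of $\mathbf{B}$; only the \emph{support} of $\mathbf{W}$, lying inside the true parent set and hence inside the causal order, is ever used. In particular, any confounding bias in the regression coefficients can alter $\mathbf{B}_{R}$ but can never move mass into or out of $\Lambda\mathbf{L}+\mathbf{E}$, so the latent-confounder influence matrix $\Lambda$ and the independent non-Gaussian noise vector are preserved verbatim, which is exactly what Lemma~\ref{pro:residualModel} asserts.
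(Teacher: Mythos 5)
Your proof is correct, and it takes a genuinely different (and tighter) route than the paper's. The paper argues in the mixing-matrix representation of Equation \ref{eq:model2}: it permutes $\mathbf{B}$ to strict lower triangular form, asserts that the coefficients from regressing $X_i$ on its found parents coincide with the corresponding entries of $\mathbf{A}$, concludes that regression zeroes those entries out, and hence that the residuals' mixing matrix (equivalently $\tilde{\mathbf{B}}$) stays strictly lower triangular. You instead stay in the structural form $(\mathbf{I}-\mathbf{B})\mathbf{X}=\Lambda\mathbf{L}+\mathbf{E}$, encode residualization as the unit-lower-triangular map $\mathbf{I}-\mathbf{W}$, and read off $\mathbf{B}_{R}=\mathbf{I}-(\mathbf{I}-\mathbf{B})(\mathbf{I}-\mathbf{W})^{-1}$ and $\mathbf{E}_{R}=\mathbf{E}$ by conjugation. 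The common idea is preservation of triangularity, but your version buys two things the paper's does not. First, it is insensitive to the numerical values of the fitted coefficients: only the support of $\mathbf{W}$, contained in the true parent sets and hence compatible with the causal order, is ever used; by contrast, the paper's step that ``the causal effect of $X_{\mathbf{Pa}_{i}}$ on $X_{i}$ is removed'' tacitly requires the population regression coefficients to equal the causal coefficients, which can fail when $X_i$ and its parents are themselves confounded --- exactly the setting this paper is concerned with. Second, you obtain the exact identities $\mathbf{E}_{R}=\mathbf{E}$ and ``same $\Lambda$'' as algebraic facts rather than informal consequences, which is precisely what the lemma asserts. The paper's route has the mild advantage of staying in the $\mathbf{A}$-representation that the overcomplete-ICA stage reuses. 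One caveat your own argument makes visible: since it works for arbitrary coefficient values, Lemma \ref{pro:residualModel} as stated is essentially an algebraic invariance; the stronger property actually exploited downstream (that with the population regression coefficients an unconfounded pair of residuals remains unconfounded and preserves the causal direction, so that Lemma \ref{pro:pair} can be re-applied) is not implied by it, and is argued only informally in the surrounding text of the paper --- but that stronger claim is not part of the statement you were asked to prove.
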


\begin{proof}
	Without loss of generality, we assume that $\mathbf{B}$ in Equation \ref{eq:model2} can be permuted to a strictly lower triangular matrix. Therefore, $\mathbf{A}$ of Equation \ref{eq:model2} is also a lower triangular matrix with diagonal entries. Since $X_{\mathbf{Pa}_{i}}$ is the parent of $X_{i}$ for each $X_{i}$, $\mathbf{A}_{i, \mathbf{Pa}_{i}}$ is equal to the regression coefficient obtained by linear regression of $X_{i}$ on $X_{\mathbf{Pa}_{i}}$. Therefore, through linear regression, the causal effect of $X_{\mathbf{Pa}_{i}}$ on $X_{i}$ is removed from $X_{i}$, that is, each $A_{i,j}$ in $\mathbf{A}_{i, \mathbf{Pa}_{i}}$ is 0, and $X_{\mathbf{Pa}_{i}}$ does not influence the residual $\mathbf{R}_{i,\mathbf{Pa}_{i}}$. Therefore, for $\mathbf{R}$, its corresponding $\tilde{\mathbf{A}}$ is still a strictly lower triangular matrix, (i.e., $\tilde{\mathbf{B}}$ is also a strictly lower triangular matrix). Therefore, $\mathbf{R} = \mathbf{B}_{R}\mathbf{R} + \Lambda\mathbf{L} + \mathbf{E}_{R}$ holds.  
\end{proof}

Thus, for each variable $X_{i}$, after removing the effect of all determined parents of $X_{i}$ by regressions and independence tests we can find the parents and children of $X_{i}$. The details of the procedure are as follows.

First, for each pair of measured variables $X_{i}$ and $X_{j}$, we perform a linear regression of $X_{i}$ on $X_{j}$, and test whether the corresponding residual $R_{i,j}$ is independent of $X_{j}$. If it is, we orient $X_{j} \to X_{i}$. Otherwise, we test whether the reverse causal direction is accepted. If neither of them is accepted, there may be at least one latent confounder or a common ancestor influencing them. After refining some edges, we remove the effects of parents by regressing the variable on its determined parents and using the corresponding residuals to replace the variables. This is because if $X_{i}$ and $X_{j}$ are unconfounded, then after we remove the information in $X_{i}$ and $X_{j}$ that can be explained by their common ancestors, the residuals in $X_{i}$ and $X_{j}$ are unconfounded and they admit the same causal direction as that between $X_{i}$ and $X_{j}$. Then, we iterate the first step for the variables with an undetermined edge between them to determine more edges, until no independence between a potential parent of variable and the corresponding residual is accepted.

\textbf{Case 2:} We then consider the second case for $X_{i}$ and its parents $X_{j}$ and $X_{k}$; we still cannot determine the causal relationship between $X_{i}$ and $X_{j}$ or between $X_{i}$ and $X_{k}$. That is to say, $X_{j}$ and $X_{k}$ are mediating variables for the path from $L$ and $X_{i}$ so that $L$ is a common cause of $X_{i}$ and $X_{j}$, and of $X_{i}$ and $X_{k}$. Using $X_{k}$ to ``block" this path will remove the influence from $L$ to $X_{i}$. This inspired us to apply regressions to address the problem, with the following theorem confirming its correctness.

\begin{lem}
    Suppose that the data over variables $\mathbf{X}$ were generated by Equation \ref{eq:model} and assumptions $\mathbf{A1}-\mathbf{A3}$ hold. Let $X_{\mathbf{par}(i)}$ denote a set of measured variables that are potential parents of $X_{i}$ and $X_{\mathbf{S}_{i}} \subset X_{\mathbf{par}(i)}$. Let $R_{i,\mathbf{S}_{i}}$ be the residual of regressing $X_{i}$ on $X_{\mathbf{S}_{i}}$. In the limit of infinite data, $X_j$ is an unconfounded parent of $X_{i}$, if and only if there exist a subset $X_{\mathbf{S}_{i}}$, defined above such that $X_{j}$ is independent of $R_{i,\mathbf{S}_{i}}$. 
	\label{pro:child}
\end{lem}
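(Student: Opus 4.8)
The plan is to pass to the source (independent-component) representation of Equation~\ref{eq:model2}: every measured variable is a fixed linear mixture $X_m=\sum_t A_{mt}S_t$ of the mutually independent, non-Gaussian sources $S=[\mathbf{L};\mathbf{E}]$, with $\mathbf{A}$ lower triangular in a causal order. For any $\mathbf{S}_i\subseteq X_{\mathbf{par}(i)}$ with least-squares weights $\{\beta_s\}$, the residual is again a mixture $R_{i,\mathbf{S}_i}=\sum_t c_t S_t$ with $c_t=A_{it}-\sum_{s\in\mathbf{S}_i}\beta_s A_{st}$. Since all sources are non-Gaussian, the Darmois--Skitovitch Theorem~\ref{theo.1} converts ``$X_j\perp R_{i,\mathbf{S}_i}$'' into the purely algebraic condition that \emph{no source is active in both}, i.e.\ $A_{jt}\,c_t=0$ for every $t$. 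The proof then reduces to deciding, for each direction, whether the unique least-squares weights can make $c_t$ vanish on the whole support of $X_j$.

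For the direction ``$X_j$ an unconfounded parent $\Rightarrow$ existence of $\mathbf{S}_i$'', I would take $\mathbf{S}_i$ to be the set of all measured parents of $X_i$; this is a legitimate subset of $X_{\mathbf{par}(i)}$ because a true parent is always adjacent to $X_i$ with an arrowhead or circle at the $X_i$ end, so FCI never orients it away. In the setting of interest no latent acts directly on $X_i$ (the confounders reach $X_i$ only through its measured parents, as in Case~2), so $X_i=\sum_p b_{ip}X_p+E_i$ and regressing on all measured parents is exact, giving $R_{i,\mathbf{S}_i}=E_i$, exactly as in Lemma~\ref{pro:residualModel}. The private noise $E_i$ cannot appear in the non-descendant $X_j$, so $A_{j,E_i}c_{E_i}=0$ and, trivially, $A_{jt}c_t=0$ for every other source; Theorem~\ref{theo.1} then yields $X_j\perp R_{i,\mathbf{S}_i}$.

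For the converse I would argue the contrapositive: if $X_j$ is not an unconfounded parent, then for \emph{every} admissible $\mathbf{S}_i$ some shared source survives. I would split into (a) $X_j$ is not a parent of $X_i$ (it is a descendant of $X_i$, or is connected to it only through confounders) and (b) $X_j\to X_i$ but a latent $L$ influences both $X_i$ and $X_j$ directly. The key leverage is that the weights are \emph{not} free: being least-squares coefficients, $R_{i,\mathbf{S}_i}$ is uncorrelated with every regressor, and $\mathbf{S}_i$ may contain only variables adjacent to $X_i$, so one cannot import an external ``pure proxy'' for $L$ to subtract it off. Using the causal order I would exhibit two independent sources of $X_j$ that stay entangled with $X_i$ after any such regression --- $E_j$ together with $E_i$ in case~(a) (via the directed path from $X_i$ to $X_j$), or $E_j$ together with $L$ in case~(b) --- and show that the uncorrelatedness constraints cannot zero the residual coefficients of both at once, so at least one (typically $E_j$) remains active and, by Theorem~\ref{theo.1}, destroys independence.

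The main obstacle is precisely this converse, and within it the need to rule out \emph{accidental} cancellation: for isolated parameter values the unique least-squares weights might, by numerical coincidence, annihilate the witness coefficient even though the graph does not force it. I would dispatch such coincidences with the Faithfulness assumption~\textbf{A2}, in its standard role of forbidding measure-zero cancellations not entailed by the structure, so that a structurally surviving source is also numerically surviving. The remaining effort is bookkeeping: ordering the sources by their injection point along the causal order, tracking which regressors in $\mathbf{S}_i$ can carry each source, and pinning down exactly what ``unconfounded parent'' is taken to exclude. This case analysis, rather than any single inequality, is where the real work concentrates.
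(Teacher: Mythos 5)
Your overall strategy is the same as the paper's: write the regression residual as a linear mixture of the independent non-Gaussian sources $[\mathbf{L};\mathbf{E}]$, convert independence of $X_j$ and $R_{i,\mathbf{S}_i}$ into a disjoint-support condition via Darmois--Skitovitch, and then run a case analysis over parent / child / confounded configurations. (The paper phrases the mixture through the blocks of $\mathbf{C}=(\mathbf{I}-\mathbf{B})^{-1}$ rather than through the mixing matrix $\mathbf{A}$, but that is cosmetic.) The genuine problem is in your forward direction. You prove it only under the extra hypothesis that no latent acts directly on $X_i$, so that regressing on all measured parents yields $R_{i,\mathbf{S}_i}=E_i$ exactly. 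That hypothesis is not implied by the lemma's premise: ``$X_j$ is an unconfounded parent of $X_i$'' only excludes latents shared by $X_i$ and $X_j$; a latent may still point directly into $X_i$ while confounding it with some third variable. In that case your chosen regression does not return $E_i$: the residual retains $\Lambda_i\mathbf{L}$, and if that latent also hits a parent in $\mathbf{S}_i$ the least-squares weights are biased, so the noise of that confounded parent leaks into the residual as well. The paper's proof covers exactly this situation: it derives $R_{i,\mathbf{S}_i}=\Lambda_i\mathbf{L}+E_i$ and concludes independence because unconfoundedness forces $X_j$ to carry zero loading on every latent entering $X_i$ directly. Your algebraic framework (the condition $A_{jt}c_t=0$ for every source $t$) can absorb this case, but you must actually argue it: the surviving sources of the residual are $E_i$, the latents directly into $X_i$, and possibly the noises of parents confounded with $X_i$, and you need to check that unconfoundedness of the pair $(X_i,X_j)$ prevents $X_j$ from loading on any of them (for instance, a latent into $X_i$ that is also an ancestor of $X_j$ would itself be a confounder of the pair, a contradiction). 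As written, your forward implication establishes the lemma only for a strictly smaller class of models than it asserts.

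On the converse, your plan --- exhibit a shared source that survives any admissible regression and rule out accidental cancellation --- matches the paper's in spirit, and the paper is itself terse there (``the effect of the latent confounder may not vanish by multiple regression''). One caution: assumption \textbf{A2} in the paper is faithfulness of conditional independencies to d-separation; invoking it to outlaw coincidental cancellation of regression-residual coefficients is a different and stronger genericity assumption, and you should state it as such rather than attribute it to \textbf{A2}.
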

\begin{proof}
    Note that $X_{\mathbf{par}(i)}$ denote all potential parents of $X_{i}$ and $X_{\mathbf{S}_{i}}$ be a subset of $X_{\mathbf{par}(i)}$. If variable $X_{j}$ is in $X_{\mathbf{par}(i)}$, then $X_{j}$ might be a (confounded) parent or child of $X_{i}$, or there is a latent confounder between $X_{i}$ and $X_{j}$ without directed edge. 
    
    1. Consider that $X_{j}$ is a parent of $X_{i}$ and there is no latent confounder between $X_{i}$ and $X_{j}$. First, we can rewrite Equation \ref{eq:model} as
    
	\begin{equation*}
	\begin{aligned}
	\mathbf{X} = \left(\begin{array}{c}
	X_{\mathbf{S}_{i}} \\ X_{i} 
	\end{array}\right) &= (\mathbf{I}-\mathbf{B})^{-1} (\Lambda \mathbf{L} + \mathbf{E})= \mathbf{C} (\Lambda \mathbf{L} + \mathbf{E})\\
	&= \left[\begin{array}{cc}
	\mathbf{C}_{\mathbf{S}_{i}} & \mathbf{C}_{\mathbf{S}_{i},i}^{T} \\ \mathbf{C}_{i,\mathbf{S}_{i}} & 1
	\end{array}\right] \left[ \begin{array}{c}
	\Lambda_{X_{\mathbf{S}_{i}}}\mathbf{L} + E_{\mathbf{S}_{i}} \\ \Lambda_{i} \mathbf{L} +E_{i}\end{array}\right],
	\end{aligned}
	\end{equation*}
	where $\mathbf{C}=(\mathbf{I}-\mathbf{B})^{-1}$. The inverse of $\mathbf{C}$ can be written as
	\begin{equation}
	\mathbf{C}^{-1} =\left[\begin{array}{cc} \mathbf{D}^{-1} & -\mathbf{C}_{\mathbf{S}_{i},i}^{T} \mathbf{D}^{-1}\\
	-\mathbf{D}^{-1}\mathbf{C}_{i,\mathbf{S}_{i}} & (1-\mathbf{C}_{\mathbf{S}_{i},i}^{T}\mathbf{C}_{\mathbf{S}_{i}}^{-1}\mathbf{C}_{i,\mathbf{S}_{i}}) ^{-1}\end{array}\right],
	\end{equation}
	where $\mathbf{D}= \mathbf{C}_{\mathbf{S}_{i}} - \mathbf{C}_{\mathbf{S}_{i},i}^{T} \mathbf{C}_{i,\mathbf{S}_{i}}$. Thus, $1-\mathbf{C}_{\mathbf{S}_{i},i}^{T}\mathbf{C}_{\mathbf{S}_{i}}^{-1}\mathbf{C}_{i,\mathbf{S}_{i}}=1$.
	
	Then, regressing $X_{i}$ on $X_{\mathbf{S}_{i}}$, we have
	\begin{equation*}
	\begin{aligned}
	R_{i,\mathbf{S}_{i}} 
	=& X_{i} - \frac{\mathbb{E}[{X_{i},X_{\mathbf{S}_{i}}}]}{\mathbb{E}[{X_{\mathbf{S}_{i}}}^2]} X_{\mathbf{S}_{i}}\\
	=& \mathbf{C}_{i,\mathbf{S}_{i}} (\Lambda_{X_{\mathbf{S}_{i}}}\mathbf{L} + \mathbf{E}_{\mathbf{S}_{i}}) + (\Lambda_{i} \mathbf{L} +E_{i} ) \\
	&- \alpha_{i,\mathbf{S}_{i}} ( \mathbf{C}_{\mathbf{S}_{i}} (\Lambda_{X_{\mathbf{S}_{i}}}\mathbf{L} + E_{\mathbf{S}_{i}}) +\mathbf{C}_{\mathbf{S}_{i},i}^{T} (\Lambda_{i} \mathbf{L} +E_{i})  )\\
	=& \{\mathbf{C}_{i,\mathbf{S}_{i}} \Lambda_{X_{\mathbf{S}_{i}}}+\Lambda_{i}
	- \alpha_{i,\mathbf{S}_{i}}(\mathbf{C}_{\mathbf{S}_{i},i}\Lambda_{X_{\mathbf{S}_{i}}}+\mathbf{C}_{\mathbf{S}_{i},i}^{T} \Lambda_{i} ) \}\mathbf{L} \\
	&+ (\mathbf{C}_{i,\mathbf{S}_{i}} - \alpha_{i,\mathbf{S}_{i}} \mathbf{C}_{\mathbf{S}_{i}}) \mathbf{E}_{\mathbf{S}_{i}} + (1-\alpha_{i,\mathbf{S}_{i}} \mathbf{C}_{\mathbf{S}_{i},i}^{T}) E_{i},		
	\end{aligned}
\end{equation*}
	where $\alpha_{i,\mathbf{S}_{i}} = \frac{\mathbb{E}[{X_{i},X_{\mathbf{S}_{i}}}]}{\mathbb{E}[{X_{\mathbf{S}_{i}}}^2]}$.
	
	Thus, the residual $R_{i,\mathbf{S}_{i}}$ will be a linear mixture of latent confounders, the noise terms of $X_{i}$ and all variables in $X_{\mathbf{S}_{i}}$. If the linear contributions of all variables in $X_{\mathbf{S}_{i}} \setminus X_{j}$ to the influence of $X_{j}$ on $X_{i}$ have been partialed out, that is, $\mathbf{C}_{i,\mathbf{S}_{i}} - \alpha_{i,\mathbf{S}_{i}} \mathbf{C}_{\mathbf{S}_{i}} = \mathbf{0}^{T}$, then we can obtain
	
	\begin{equation}
	\begin{aligned}
	R_{i,\mathbf{S}_{i}} &= \Lambda_{i} (1
	- \alpha_{i,\mathbf{S}_{i}}\mathbf{C}_{\mathbf{S}_{i},i}^{T}) \mathbf{L}+ (1-\alpha_{i,\mathbf{S}_{i}}\mathbf{C}_{\mathbf{S}_{i},i}^{T}) E_{i}\\
	&=\Lambda_{i} \mathbf{L}+  E_{i}.
	\label{eq:23}
	\end{aligned}
	\end{equation}
	
	Because there is no latent confounder between $X_{i}$ and $X_{j}$, the coefficient of $\mathbf{L}$ on $X_{j}$ is zero. Thus, from Equation \ref{eq:23}, $R_{i,\mathbf{S}_{i}}$ is independent of $X_{j}$ due to Theorem \ref{theo.1}.
    
    2. Consider that $X_{j}$ is a confounded parent or confounded child of $X_{i}$, or that there is a latent confounder between them without directed edge. The effect of the latent confounder may not vanish by multiple regression on any measured variables. So the residual of regressing $X_{i}$ on $X_{\mathbf{S}_{i}}$ ($X_{j}\in X_{\mathbf{S}_{i}}$) is dependent of $X_{j}$. 
    
    3. Consider that $X_{j}$ is a child of $X_{i}$ and there is no latent confounder between $X_{i}$ and $X_{j}$. If we regress $X_{i}$ on every $X_{\mathbf{S}_{i}}$ which contains $X_{j}$, the residual $R_{i,\mathbf{S}_{i}}$ will be a linear mixture of the noise term of $X_{i}$ and others. According to the Equation \ref{eq:model}, $R_{i,\mathbf{S}_{i}}$ is a linear mixture of the noise term of $X_{i}$, $X_{j}$ and others. Thus, $X_{j}$ is dependent of $R_{i,\mathbf{S}_{i}}$.
\end{proof}

Lemma \ref{pro:child} inspires a method of identifying the local structure of measured variables for the second case by analyzing the PAG. According to Lemma \ref{pro:child}, we start by performing a multiple regression of undetermined variable $X_i$ on every subset of its potential parents to test whether there exist \textit{two} variables $X_{j}$ and $X_{k}$ such that the corresponding residual is independent of these two variables. If the independence holds for variable $X_j$ and the residual, then $X_j$ is a parent of $X_i$. Similarly, if undetermined edges remain, we perform a multiple regression on the subset of the potential parents containing \textit{three} variables and then \textit{four} variables, and so on, to find variables in the subset of potential parents that are unconfounded parents (according to independence tests) until no subset such that the residual is independent of the predictor(s) can be found.

Using these methods, we find local causal structures over measured variables that are adjacent to an undetermined edge in $\mathcal{G}$. In this stage, when an edge is reoriented, we apply FCI orientation rules \cite{zhang2008completeness} to orient other undetermined edges and update the corresponding potential parent sets. Using these orientation rules saves a number of regressions and independence tests.

As an example, using the causal graph from Figure \ref{figure.f1}(b), we obtain the output by (multiple) regressions and independence tests. By applying the FCI orientation rule $\mathcal{R}8$ \cite{zhang2008completeness}, we reorient the edge between $X_{2}$ and $X_{5}$ according to assumption \textbf{A3}. The final graph produced by this stage is shown in Figure \ref{figure.f1}(c).

According to the stage II process, the following theorem summarizes identifiability.
\begin{theo}
Suppose that the data over variables $\mathbf{X}$ was generated by model (\ref{eq:model}) and assumptions $\mathbf{A1}-\mathbf{A3}$ hold. Let $\mathcal{G}_1$ denote the output of stage I. The pairs of variables with an undirected edge in between in $\mathcal{G}_1$ that are not actually directly influenced by the same latent confounder are identified by stage II of FRITL.
\end{theo}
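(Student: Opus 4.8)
The plan is to prove the two halves of an identifiability claim separately: \emph{soundness}, that every edge Stage~II commits to is correctly oriented, and \emph{completeness}, that every targeted pair is eventually oriented. Soundness is essentially inherited from the ``only if'' directions of Lemmas~\ref{pro:pair}, \ref{pro:residualModel}, and~\ref{pro:child}, so the real work is completeness. First I would pin down the targeted set. By the soundness of FCI (Theorem~1 and Lemma~1), an undirected edge $X_i \circ\!\!-\!\!\circ X_j$ in $\mathcal{G}_1$ corresponds in the true DAG to $X_i \to X_j$, to $X_j \to X_i$, to a direct latent common cause of the two, or to a combination of a directed edge with such a confounder. Removing the pairs that are ``directly influenced by the same latent confounder'' discards every case containing a direct latent common cause, leaving exactly the pairs joined by a genuine directed edge in the true DAG, possibly accompanied by indirect confounding routed through other measured variables. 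Hence the theorem reduces to the claim that Stage~II recovers the true direction of each such edge.

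Next I would order the measured variables by a topological order of the true DAG, which exists by acyclicity (\textbf{A3}), and argue that the fixed point of Stage~II orients all targeted edges by induction along this order. The inductive hypothesis is that, by the time $X_i$ is treated, every targeted directed edge among the ancestors of $X_i$ has been oriented, so that all measured parents of $X_i$ that are unconfounded with it are already available as determined parents. The step divides into the three situations of Stage~II. (i) If $X_i$ and a potential parent $X_j$ share no confounder, direct or indirect, the two residual-independence tests of Lemma~\ref{pro:pair} fix the direction. (ii) In Case~1, $X_i$ is confounded with $X_j$ only indirectly, through an already-oriented ancestor; regressing the endpoints on their determined parents deletes that indirect influence, Lemma~\ref{pro:residualModel} certifies that the residuals still satisfy a LvLiNGAM (so no new confounder is introduced), and Lemma~\ref{pro:pair} applied to the residuals recovers the direction. (iii) In Case~2, several parents of $X_i$ are driven by a confounder not adjacent to $X_i$; Lemma~\ref{pro:child} guarantees that for each genuine unconfounded parent $X_j$ there is a subset $X_{\mathbf{S}_i} \subseteq X_{\mathbf{par}(i)}$ whose removal by multiple regression blocks the confounding path and makes $R_{i,\mathbf{S}_i} \perp\!\!\!\perp X_j$, and the subset sweep performed in Stage~II is precisely the search that locates it.

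Assembling the cases shows that once $X_i$ is processed every targeted edge incident to it is oriented, which closes the induction; the interleaved FCI orientation rules only propagate already-justified orientations, so they preserve soundness while pruning redundant tests. Termination follows because the number of undetermined edges is finite and strictly decreases with each new commitment, and the subset sweep for a fixed $X_i$ ranges over the finitely many subsets of $X_{\mathbf{par}(i)}$. The hard part will be the completeness direction of Case~2: I must verify that the blocking subset $X_{\mathbf{S}_i}$ delivered by Lemma~\ref{pro:child} consists only of variables that are \emph{present as potential parents in the current $\mathcal{G}_1$}, so that the sweep can actually reach it, and that partialing those variables out does not reopen a latent path and spuriously recreate dependence on $X_j$. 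Establishing this requires reading the residual expansion in the proof of Lemma~\ref{pro:child} together with the ``no new confounder'' guarantee of Lemma~\ref{pro:residualModel}, which is where the argument carries most of its weight.
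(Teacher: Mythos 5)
Your proposal takes essentially the same route as the paper: the paper's own proof is a single short paragraph that simply invokes Lemma~\ref{pro:pair} for unconfounded pairs, Lemma~\ref{pro:residualModel} for the residual-replacement step handling indirect confounding (Case~1), and Lemma~\ref{pro:child} for the subset-regression sweep (Case~2), which is exactly your case decomposition with the same lemmas in the same roles. The extra scaffolding you supply (the soundness/completeness split, the induction along a topological order, termination) and the ``hard part'' you flag for Case~2 --- that the blocking subset must lie within the potential-parent set and that partialing out cannot reopen a latent path --- are all points the paper's one-paragraph proof glosses over entirely, so your plan is, if anything, more careful than the source.
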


\begin{proof}
Under the assumptions of the theorem, stage I removes most of the independent causal edges, which provides stage II with (conditional) independence information. With the help of Lemmas \ref{pro:pair} and \ref{pro:residualModel}, we can determine the direction of the causal relationship between variables that are not directly influenced by the same latent confounder. Lemma \ref{pro:child} provides the identifiability conditions of the causal structure between observed variables that are not influenced by the same latent confounder.
\end{proof}

As a consequence, what remains to be identified is the causal structure between variables directly influenced by the same latent confounders.

\subsection{Stage III: Detecting Shared Latent Confounders Using the Triad Condition}

\begin{figure}[t]
	\centering
	\includegraphics[width=0.8\textwidth]{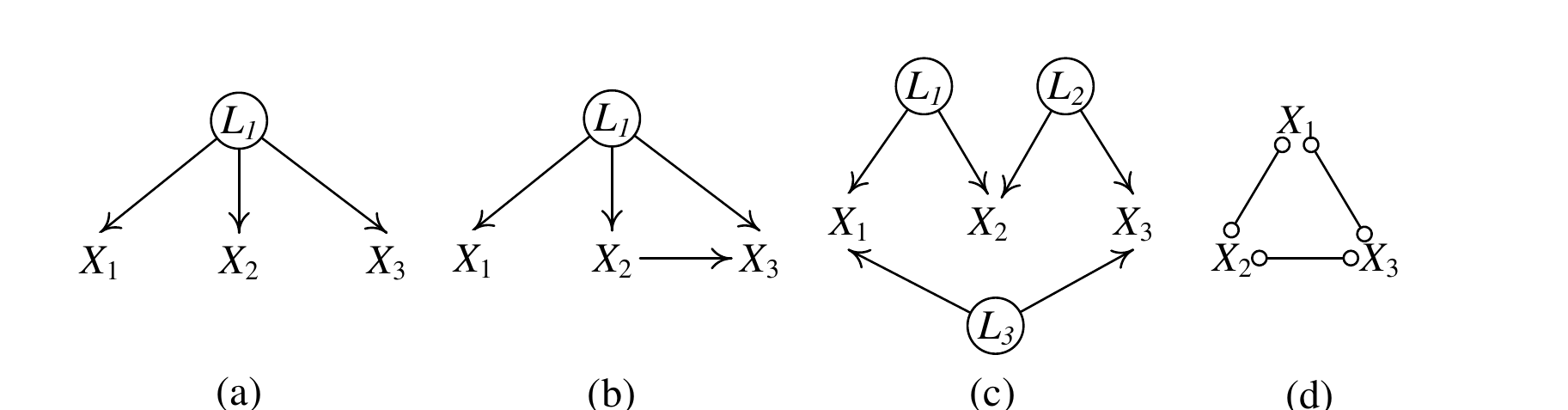}
	\caption{The three causal graphs in (a), (b), and (c) all correspond to the completed nondirected PAG obtained by FCI shown in (d). They cannot be distinguished by stage II of our method.} \label{figure.stage3PAG}
\end{figure}

The procedure so far determines whether latent confounders exist in many cases, but some graphs corresponding to the PAG shown in Figure \ref{figure.stage3PAG} remain undistinguishable. Stage II only considers two variables each time. Thus, there are no details about the causal relationship  (e.g., whether there is a direct causal relation and which way the causal influence goes) between two variables that are directly influenced by a same latent confounder, because these two variables both contain the information of the latent confounder. Suppose that assumptions \textbf{A3} hold. Interestingly, if we consider another measured variable at the same time, we can treat this third variable as a ``conditional variable" or an ``instrumental variable" and use it to help remove the indirect causal relationship (due to the existence of latent confounders) through the path containing latent confounders. The Triad condition \cite{cai2019Triad}, which the proposed procedure makes use of, is described as follows. 

\begin{defi}\textbf{(Triad condition)} Suppose assumptions \textbf{A1} - \textbf{A3} hold. For a triple of measured variables $(X_{i}, X_{j}, X_{k})$ generated by (\ref{eq:model}). $X_{j}$ and $X_{k}$ are Triad conditional on $X_{i}$ (or given $X_{i}$), when the residual $E_{X_{k}, X_{j} \mid X_{i}}=X_{k}-\frac{Cov(X_{i},X_{k})}{Cov{(X_{i},X_{j})}} \cdot X_{j}$ is independent of $X_{i}$, that is, $E_{X_{j}, X_{k} \mid X_{i}} \perp\!\!\!\perp X_{i} $. If the Triad condition is satisfied, we denote it by $Triad(X_{j}, X_{k}\mid X_{i})$.
	\label{def:triad}
\end{defi}

It is easy to establish the property that the Triad condition is symmetric, that is, $Triad(X_{j}, X_{k}\mid X_{i})$ if and only if $Triad(X_{k}, X_{j}\mid X_{i})$.

The three possible causal graphs given in Figures \ref{figure.stage3PAG} (a)-(c) over three measured variables $(X_{i},$ $X_{j}, X_{k})$ correspond to the PAG in Figure \ref{figure.stage3PAG} (d) that is produced by stage I. None of the three undirected edges can be reoriented by stage II. Based on the Triad condition, We detect whether three variables share a latent confounder via the following Theorem \ref{Triad1}.

\begin{theo}
Suppose that the data over variables $\mathbf{X}$ was generated according to Equation \ref{eq:model} and assumptions $\mathbf{A1}-\mathbf{A4}$ hold. Let $\mathcal{G}_2$ denote the output of stage II of FRITL. For three observed variables $(X_{i},$ $X_{j}, X_{k})$ with an undetermined edge between each pair of them in $\mathcal{G}_2$, if and only if three Triad conditions hold among $(X_{i},$ $X_{j}, X_{k})$, then $(X_{i},$ $X_{j}, X_{k})$ are directly influenced by a same latent confounder, and each pair of observed variables are not directly connected.
	\label{Triad1}
\end{theo}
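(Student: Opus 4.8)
The plan is to reduce everything to the reduced-form representation of Equation \ref{eq:model2} and then let the Darmois--Skitovitch Theorem (Theorem \ref{theo.1}) do the work. First I would invoke Lemma \ref{pro:residualModel}: after Stage II, replacing each variable by its residual on the already-determined parents leaves a model of the same form, so for the triple $(X_i, X_j, X_k)$ I may assume that all determined measured-variable influences have been stripped off, and I would normalize to zero mean and unit variance as in the earlier proofs. Because an undirected edge survives Stage II only between variables directly influenced by a common latent confounder, each of the three pairs shares a latent confounder; and since, by $\mathbf{A4}$, every pair is confounded by at most one latent variable, any relevant confounder points either to exactly one of the three pairs (exactly two of the variables) or to all three. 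This yields a clean dichotomy: either a single latent confounder $L$ is a direct cause of all three variables (possibly accompanied by direct edges among them), or the three pairs are confounded by three distinct latent variables arranged in a triangle. In all cases I write $X_t = (\text{confounder terms}) + \tilde E_t$ with the $\tilde E_t$ mutually independent and non-Gaussian by $\mathbf{A3}$.

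For the direction ``single shared confounder with no direct edge $\Rightarrow$ all three Triad conditions hold,'' I would compute directly. Writing $X_i = a_i L + \tilde E_i$, $X_j = a_j L + \tilde E_j$, $X_k = a_k L + \tilde E_k$, the coefficient in Definition \ref{def:triad} is $\mathrm{Cov}(X_i,X_k)/\mathrm{Cov}(X_i,X_j) = a_k/a_j$, so the Triad residual $E_{X_k,X_j\mid X_i} = X_k - (a_k/a_j)X_j = \tilde E_k - (a_k/a_j)\tilde E_j$ loses the confounder term entirely and involves only $\tilde E_k,\tilde E_j$, which are independent of $X_i = a_i L + \tilde E_i$; hence $Triad(X_j,X_k\mid X_i)$ holds. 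Since the configuration is symmetric in the three indices, the same computation gives the other two Triad conditions.

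For the converse, ``all three Triad conditions hold $\Rightarrow$ single shared confounder with no direct edges,'' I would argue by contradiction, ruling out the two remaining admissible configurations. In the triangle case $X_i = \alpha L_1 + \gamma L_3 + \tilde E_i$, $X_j = \beta L_1 + \delta L_2 + \tilde E_j$, $X_k = \epsilon L_2 + \zeta L_3 + \tilde E_k$, the residual $E_{X_k,X_j\mid X_i}$ still carries nonzero multiples of the non-Gaussian sources $L_1$ and $L_3$, both of which also appear in $X_i$, so Theorem \ref{theo.1} forces dependence and $Triad(X_j,X_k\mid X_i)$ fails. In the case of a common confounder plus a direct edge, say $X_j \to X_k$, substituting $X_k = a_k'L + b\tilde E_j + \tilde E_k$ shows that the residual of the Triad conditioned on $X_j$ retains the term $b\tilde E_j$ with $b\neq 0$, which is shared with $X_j$, so that Triad condition fails. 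In either case not all three conditions can hold, which together with the dichotomy forces the single-shared-confounder, no-direct-edge structure.

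The step I expect to be the main obstacle is the converse, and specifically two points. First, justifying the configuration dichotomy rigorously: I must use $\mathbf{A4}$ to show a latent variable cannot confound exactly two of the three pairs (it would then be a common cause of all three variables and hence confound the third pair as well), and invoke the Stage II guarantee to exclude surviving edges that are pure direct edges. Second, I must rule out accidental cancellations of the residual coefficients; here I would appeal to faithfulness ($\mathbf{A2}$) and genericity of the coefficients to guarantee that the shared non-Gaussian sources survive with nonzero coefficients, so that Darmois--Skitovitch genuinely yields dependence rather than a degenerate independence.
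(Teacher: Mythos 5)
Your argument is correct in substance and runs on the same engine as the paper's proof: write each variable as a linear mixture of independent non-Gaussian sources, compute the Triad residual explicitly, and invoke the Darmois--Skitovitch Theorem (Theorem \ref{theo.1}) to force or refute independence. The difference is organizational. The paper works \emph{coefficient-first}: it parametrizes all admissible structures at once, $X_i = \Lambda_i\mathbf{L}+E_i$, $X_j = b_{j,i}X_i+\Lambda_j\mathbf{L}+E_j$, $X_k = b_{k,i}X_i+b_{k,j}X_j+\Lambda_k\mathbf{L}+E_k$, and shows that each Triad condition annihilates particular causal strengths (the Triad given $X_i$ forces $b_{j,i}=b_{k,i}=0$; the Triad given $X_j$ forces $b_{k,j}=0$ and then $b_{j,i}=0$), so that the three Triads jointly force all direct edges to vanish. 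You work \emph{structure-first}: you use $\mathbf{A4}$ together with the Stage II guarantee to establish a dichotomy (one confounder shared by all three variables, possibly with direct edges, versus a triangle of three distinct pairwise confounders), and then kill each alternative branch by a Darmois--Skitovitch computation. Your route buys an explicit refutation of the triangle configuration of Figure \ref{figure.stage3PAG}(c), which the paper's proof never actually computes (it surfaces only in the discussion after the theorem), so on that point your argument is more complete than the paper's own. Two caveats, neither fatal: first, your dichotomy as stated drops direct edges from the triangle branch, which $\mathbf{A4}$ does not forbid; the omission is harmless because in that branch the residual $E_{X_k,X_j\mid X_i}$ carries $L_3$ with coefficient exactly $\zeta$ (since $X_j$ contains no $L_3$), or failing that carries $\tilde E_i$ with nonzero coefficient, so dependence on $X_i$ follows for any parameter values. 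Second, your appeal to faithfulness ($\mathbf{A2}$) to exclude cancellations is off-target, since faithfulness constrains conditional independencies rather than these regression coefficients; but it is also unnecessary, because in each of your cases the decisive shared source enters the residual with a coefficient that is nonzero identically ($b$ on $\tilde E_j$ in the edge case, $\zeta$ on $L_3$ in the triangle case), so Darmois--Skitovitch applies with no genericity assumption at all.
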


\begin{proof}
	Suppose that the data over variables $\mathbf{X}$ was generated by Equation \ref{eq:model}. Without loss of generality, we assume three variables in $\mathbf{X}$, $X_{i}$, $X_{j}$ and $X_{k}$, are standardized (they have a zero mean and a unit variance) and have causal relations in between, in addition to the influences of latent confounders. Note that if a coefficient is zero, then the corresponding edge vanishes. Then we have
	\begin{equation}
	\begin{aligned}
	X_i &= \Lambda_{i}\mathbf{L}+E_{i},\\
	X_j &= b_{j,i}X_i + \Lambda_{j}\mathbf{L} +E_{j},\\
	X_k &= b_{k,i}X_i + b_{k,j}X_j + \Lambda_{k}\mathbf{L} +E_{k}.
	\end{aligned}
	\end{equation}
	
	Three kinds of Triad conditions might hold among three variables: $Triad(X_{j},X_{k}\mid X_{i})$, $Triad(X_{i},X_{k}\mid X_{j})$ and $Triad(X_{i},X_{j}\mid X_{k})$. So we consider three cases conditioning on different variables as follows.
	
	1. Considering a Triad condition conditioning on $X_{i}$, we can obtain the following reference variable 
	\begin{equation*}
	\begin{aligned}
	E_{X_{k}, X_{j} \mid X_{i}}&= X_{k}-\frac{\mathrm{Cov}(X_{i},X_{k})}{\mathrm{Cov}(X_{i},X_{j})}\cdot X_{j}\\
	&= (b_{k,i}X_i + b_{k,j}X_j + \Lambda_{k}\mathbf{L} +E_k) 
	 - \frac{b_{k,i}+ \Lambda_{i}\Lambda_{k}^T}{b_{j,i}+\Lambda_{i}\Lambda_{j}^T} \cdot (b_{j,i}X_i + \Lambda_{j}\mathbf{L} +E_j)\\
	&= \frac{ b_{j,i}\Lambda_{k}-b_{k,i}\Lambda_{j}}{b_{j,i}+\Lambda_{i}\Lambda_{j}^T} \cdot \mathbf{L} + \frac{ (b_{k,i}\Lambda_{i}\Lambda_{j}^T - b_{j,i}\Lambda_{i}\Lambda_{k}^T)}{b_{j,i}+\Lambda_{i}\Lambda_{j}^T} \cdot X_{i}  - \frac{b_{k,i}+\Lambda_{i}\Lambda_{k}^T}{b_{j,i}+\Lambda_{i}\Lambda_{j}^T} \cdot E_{j}+ E_{k}\\
	&= \frac{\Lambda_{i}\Lambda_{i}^T \cdot (b_{k,i}\Lambda_{j} - b_{j,i}\Lambda_{k})}{b_{j,i}+\Lambda_{i}\Lambda_{j}^T} \cdot \mathbf{L} + \frac{ (b_{k,i}\Lambda_{i}\Lambda_{j}^T-b_{j,i}\Lambda_{i}\Lambda_{k}^T)}{b_{j,i}+\Lambda_{i}\Lambda_{j}^T} \cdot E_{i} \\
	&- \frac{\Lambda_{i}\Lambda_{k}^T+b_{k,i}}{b_{j,i}+\Lambda_{i}\Lambda_{j}^T} \cdot E_{j}+ E_{k},
	\end{aligned}
	\end{equation*}
	which is a linear mixture of independent variables, namely, $\mathbf{L}$, $E_{i}$, $E_{j}$, and $E_{k}$. As we know, $X_{i}$ is a mixture of independent variables $\mathbf{L}$ and $E_{i}$. If the parameters in this model are not zero, it is dependent on $X_{i}$ because of Theorem \ref{theo.1}. 
	Next, if it satisfies $Triad(X_{j}, X_{k}\mid X_{i})$, i.e., $E_{X_{k}, X_{j} \mid X_{i}}$ is independent of $X_{i}$. According to Theorem \ref{theo.1}, at most one of the coefficients of their common parameters, $\mathbf{L}$ and $E_{i}$ should be zero. Therefore, $\frac{ \Lambda_{i}\Lambda_{i}^T \cdot (b_{k,i}\Lambda_{j} - b_{j,i}\Lambda_{k})}{b_{j,i}+\Lambda_{i}^T\Lambda_{j}}$ and $b_{k,i}\Lambda_{i}\Lambda_{j}^T-b_{j,i}\Lambda_{i}\Lambda_{k}^T$ should be equal to zero, i.e., $b_{j,i}=b_{k,i}=0$, because $\Lambda_{i}$, $\Lambda_{j}$ and $\Lambda_{k}$ are nonzero. Then, $E_{X_{k}, X_{j} \mid X_{i}}$ becomes a linear mixture of $E_{j}$, and $E_{k}$ and is independent of $X_{i}$. Thus, there are no edges between $X_{i}$ and $X_{j}$, and between $X_{k}$ and $X_{i}$.
	
	2. Consider a Triad condition conditioning on $X_{j}$, we can obtain the following reference variable 
	\begin{equation*}
	\begin{aligned} 
	E_{X_{k}, X_{i} \mid X_{j}}=& X_{k}-\frac{\mathrm{Cov}(X_{k},X_{j})}{\mathrm{Cov}(X_{i},X_{j})}\cdot X_{i} \\
	=& (b_{k,i}X_i + b_{k,j}X_j + \Lambda_{k}\mathbf{L} +E_{k})- (b_{k,i} + \frac{\Lambda_{k}\cdot (\Lambda_{j}^T + b_{j,i}\Lambda_{i}^T)+ b_{k,j}}{b_{j,i}+\Lambda_{i}\Lambda_{j}^T}) \cdot X_{i}\\
	= &(\Lambda_{k}\cdot \mathbf{L} + b_{k,j}\cdot (b_{j,i}X_i + \Lambda_{j}\mathbf{L} +E_{j}) +E_{k}) - \frac{\Lambda_{k}\cdot (\Lambda_{j}+b_{j,i}\Lambda_{i}) + b_{k,j}}{b_{j,i}+\Lambda_{i}\Lambda_{j}^T} \cdot X_{i}\\
	=& (b_{k,j}b_{j,i}\Lambda_{i}+b_{k,j}\Lambda_{j}+\Lambda_{k}-\frac{(\Lambda_{k} \Lambda_{j}^T+b_{j,i}\Lambda_{k}\Lambda_{i}^T + b_{k,j})\Lambda_{i}}{b_{j,i}+\Lambda_{i}\Lambda_{j}^T}) \cdot \mathbf{L} \\
	& -(b_{k,j}b_{j,i}-\frac{\Lambda_{j}\Lambda_{k}^T+b_{j,i}\Lambda_{i}\Lambda_{k}^T   +b_{k,j}}{b_{j,i}+\Lambda_{i}\Lambda_{j}^T} )\cdot E_{i} + b_{k,j} \cdot E_{j} + E_{k},
	\end{aligned}
	\end{equation*}
	which is a linear mixture of four independent variables, namely, $\mathbf{L}$, $E_{i}$, $E_{j}$, and $E_{k}$. We can see that 
	\begin{equation}
	X_{j} = b_{j,i}X_i + \Lambda_{j}\mathbf{L} +E_{j} = (b_{j,i}\Lambda_{i} + \Lambda_{j}) \cdot \mathbf{L} + b_{j,i} \cdot E_{i} + E_{j},
	\end{equation}
	which is a mixture of three independent variables $\mathbf{L}$, $E_{i}$ and $E_{j}$. If all parameters in this model are non-zero, $E_{X_{k}, X_{i} \mid X_{j}}$ is dependent of $X_{j}$ because of the Theorem \ref{theo.1}. 
	
	If all three variables are directly influenced by the same latent confounder, satisfies $Triad(X_{i}, X_{k}\mid X_{j})$, i.e., $E_{X_{k}, X_{i} \mid X_{j}}$ is independent of $X_{j}$. According to Theorem \ref{theo.1}, at most one of the coefficients on their common parameters, $\mathbf{L}$, $E_{i}$ and $E_{j}$, should be zero. Therefore, $b_{k,j}$ would be zero, and then we can see that $b_{j,i}$ would be zero, too. Then, $E_{X_{k}, X_{i} \mid X_{j}}$ becomes a linear mixture of $E_{i}$ and $E_{k}$, and is independent of $X_{j}$. This also shows that the graph in which there is at most one directed edge between two measured variables and one latent confounder influences them at the same time are distinguishable by the Triad condition.
	
	3. Consider a Triad conditioning on $X_{k}$, similar with the two cases above, we can know that if there is only one edge between $X_{i}$ and $X_{j}$, i.e., $b_{k,j}=b_{k,i}=0$, then the graph implies Triad condition $Triad(X_{i}, X_{j}\mid X_{k})$.
	
	In conclusion, if there is not a directed edge between any pair of measured variables, that is, $b_{j,i}=b_{k,j}=b_{k,i}=0$, then the corresponding causal graph implies three Triad conditions, which are $Triad(X_{j}, X_{k}\mid X_{i})$, $Triad(X_{i}, X_{k}\mid X_{j})$, $Triad(X_{i}, X_{j}\mid X_{k})$. According to assumption $\textbf{A3}$, $b_{j,i}=0$ means that there is no direct edge between observed variables $X_{i}$ and $X_{j}$. If at least two causal strengths in $\{b_{j,i},b_{k,j},b_{k,i}\}$ are zero, then the causal structure over $(X_{i}, X_{j}, X_{k})$ satisfied one of the graph given in Figure \ref{figure.f3}. For three variables that are mutually adjacent with undetermined edges in the sub-graph of $\mathcal{G}_2$, if there exist one observed variable that is not adjacent to other two observed variables, then stage III of FRITL is able to identify these observed variables are influenced by the same latent confounder.
\end{proof}

Theorem \ref{Triad1} provides a criterion for detecting the latent confounders that directly influence three (or more) measured variables. Therefore, let $\mathcal{G}_{2}$ be the graph produced by stage II. We test Triad conditions on every triple of variables with undetermined edges among them in $\mathcal{G}_{2}$, and determine whether these three variables are influenced by a same latent confounder according to Theorem \ref{Triad1}. For example, in Figure \ref{figure.f1}(c), the triple $X_{5}$, $X_{7}$, $X_{8}$ satisfies three Triad conditions. We remove the edges among $X_{5}$, $X_{7}$ and $X_{8}$ and record that they are influenced by the same latent confounder. The output of stage III is given as Figure \ref{figure.f1}(d). After this stage, we can group some variables that are directly influenced by the same latent confounders, and eliminate more undetermined edges.

From the proof of Theorem \ref{Triad1}, if only one Triad condition is satisfied among three variables with undetermined edges in between, then the undetermined sub-graph might be one of the four cases given in Figure \ref{figure.f3}. With the help of Triad conditions, we can further apply overcomplete ICA to select the best model if needed.

\begin{figure}[h!t]
	\centering
	\includegraphics[width=0.8\textwidth]{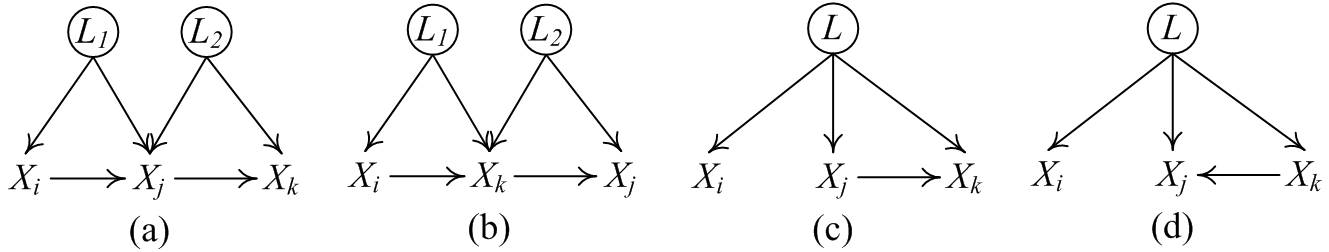}
	\caption{Four causal structures corresponding to the PAG given in Figure \ref{figure.stage3PAG} and satisfying the Triad condition $Triad(X_{j}, X_{k}\mid X_{i})$.} \label{figure.f3}
\end{figure}

If we cannot find any Triad conditions among the considered triple of variables, then the original causal graph belongs to one of the following cases:

\begin{itemize}
    \item There are two or more than two directed edges between the three measured variables in the original causal graph, in which these variables are directly influenced by the same latent confounder;
    \item Each pair of the three measured variables is directly influenced by a latent confounder, as in Figure \ref{figure.stage3PAG} (c).
\end{itemize}

\subsection{Stage IV: Estimating Remaining Undetermined Local Structures Using Overcomplete ICA}
In the general case, some undetermined edges not identified by the previous stages might remain. For example, in Figure \ref{figure.f1}(d), the edge between $X_{4}$ and $X_{7}$ cannot be determined by the preceding stages. We now apply overcomplete ICA locally to identify the local undetermined causal structures, that is, using overcomplete ICA only on the data of $X_{4}$ and $X_{7}$ after regressing their known parents out in order to remove the common causal effects.

According to Hoyer et al. \cite{hoyer2008estimation}, two latent variable LiNGAM models are observationally equivalent if and only if the distribution $P$ of the observed data is identical for these two models. A latent variable LiNGAM model, where each latent variable is a root node (i.e., has no parents) and has at least two children (direct descendants), is a canonical model.
Under assumption \textbf{A3}, we note that $\mathbf{A}$ in Equation \ref{eq:model2} can be estimated up to the permutation and scaling of the columns, as given in the following lemma. 

\begin{lem}
	If assumptions \textbf{A1}-\textbf{A4} are true, and $\mathbf{X}$ is generated according to (\ref{eq:model2}), $\mathbf{A}$ is identifiable up to permutation and scaling of columns. All the causal structure is identified up to observationally equivalent canonical models.	
\end{lem}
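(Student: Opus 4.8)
The plan is to recognize Equation~(\ref{eq:model2}) as an overcomplete ICA model, invoke the identifiability theory for such models to pin down the mixing matrix up to permutation and scaling of columns, and then translate that into causal structure via Hoyer et al.'s canonical-model equivalence.

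First I would stack the latent confounders and the noises into a single source vector $\mathbf{S} := [\mathbf{L}^{T},\mathbf{E}^{T}]^{T}$, so that Equation~(\ref{eq:model2}) reads $\mathbf{X} = \mathbf{A}\mathbf{S}$. I would verify that the components of $\mathbf{S}$ are mutually independent and non-Gaussian: assumption \textbf{A3} makes every $E_{i}$ non-Gaussian and mutually independent, assumption \textbf{A4} makes the confounders in $\mathbf{L}$ mutually independent, and the acyclicity in \textbf{A3} together with the fact that the confounders are root variables makes $\mathbf{L}$ independent of $\mathbf{E}$. Hence $\mathbf{X}=\mathbf{A}\mathbf{S}$ is a genuine (generally overcomplete, since $\dim \mathbf{S} > \dim \mathbf{X}$) ICA model whose source vector satisfies the hypotheses of the identifiability results.

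Second, I would apply the identifiability theorem for overcomplete ICA \cite{eriksson2004identifiability}: when the sources are mutually independent and non-Gaussian, the mixing matrix is determined by the distribution of $\mathbf{X}$ up to permutation and scaling of its columns, provided that no two columns of $\mathbf{A}$ are proportional. I would discharge this non-degeneracy condition using the block structure of $\mathbf{A}$: the noise block is $(\mathbf{I-B})^{-1}$, which (after permuting $\mathbf{B}$ to strictly lower-triangular form, as in the proof of Lemma~\ref{pro:residualModel}) is lower-triangular with unit diagonal and hence has linearly independent, pairwise non-proportional columns; assumption \textbf{A4} (confounders independent, each pair of observed variables sharing at most one confounder) then ensures the confounder-block columns $(\mathbf{I-B})^{-1}\Lambda$ are pairwise non-proportional and not proportional to any noise column. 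This yields identifiability of $\mathbf{A}$ up to column permutation and scaling.

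Finally, I would convert identifiability of $\mathbf{A}$ into identifiability of the causal structure. From $\mathbf{A}$ recovered up to permutation and scaling, the noise columns (forming an invertible block) are separated from the confounder columns; reading off $\mathbf{B}$ and $\Lambda$ from the corresponding blocks recovers the directed edges among measured variables and the confounder-to-variable edges. Because the labels, signs, and scales of the latent variables are not observable, and because distinct latent-variable LiNGAM models can induce the same $P$, the recovered structure is only pinned down within an observational-equivalence class; invoking Hoyer et al.'s result that every such class contains a unique canonical model (latent variables are roots with at least two children) \cite{hoyer2008estimation} gives identifiability ``up to observationally equivalent canonical models.'' The main obstacle I anticipate is this last translation step rather than the ICA citation: after the permutation/scaling ambiguity one must argue that the block identified as the noise block is forced (so that $\mathbf{B}$ and $\Lambda$ are read off consistently) and that the reconstruction is genuinely canonical --- in particular that a recovered latent source with only one child is indistinguishable from an extra noise term, which is precisely why the claim can only be stated up to canonical equivalence. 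Verifying the non-proportionality of columns under \textbf{A4}, and excluding the degenerate configurations it rules out, is the delicate part.
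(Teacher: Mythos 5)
Your proposal takes essentially the same route as the paper: the paper's proof is a one-line citation of Theorem 10.3.1 in \cite{kagan1973characterization} (equivalently Theorem 1 in \cite{eriksson2004identifiability}) for identifiability of $\mathbf{A}$ up to column permutation and scaling, together with the canonical-model equivalence result of \cite{hoyer2008estimation} for the causal-structure part, which is exactly the pair of results you invoke. Your extra steps---checking that the stacked source vector $[\mathbf{L}^{T},\mathbf{E}^{T}]^{T}$ has mutually independent non-Gaussian components under \textbf{A3}--\textbf{A4}, and that \textbf{A4} rules out proportional columns of $\mathbf{A}$---correctly fill in hypotheses that the paper leaves implicit.
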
\label{le:oICA}

\begin{proof}
	This lemma is implied by Theorem 10.3.1 in \cite{kagan1973characterization} or Theorem 1 in \cite{eriksson2004identifiability}. It is also proven in \cite{hoyer2008estimation}.	
\end{proof}

Let $\mathcal{G}_{3}$ be the graph obtained by stage III with undetermined edges. If $\mathcal{G}_{3}$ has many variables with undetermined edges in between, applying overcomplete ICA on all of them together has a very high procedural complexity with limited estimation accuracy. 
Stage II determines all unconfounded edges, and the variables with undetermined edges in between are directly influenced by latent confounders.  
We notice that if several measured variables are directly influenced by the same confounder, then a clique forms in the output of Stage II, with an underdetermined edge between each pair of them. Stage III, with output $\mathcal{G}_{3}$, already identifies a special case where multiple variables share the same confounder by checking for the Triad condition. Taking this further, we must figure out whether the variables
are directly influenced by the same latent confounders. To do so, we consider the subsets of the variables forming a maximal clique involving only undetermined edges and then apply overcomplete ICA to
estimate their causal structure. We understand that it is not necessary for the variables in the same maximal clique to be directly influenced by the same confounder, as seen in the structure given in Figure \ref{figure.stage3PAG} (c). 

 Specifically, we consider only the undetermined edges in $\mathcal{G}_{3}$, and separate relevant variables into different (possibly overlapping) maximal cliques with all edges undetermined in each. For variables in the same maximal clique, we estimate the causal structure together with the influences from confounders. For each maximal clique, it is possible for all variables in the maximal clique to be directly influenced by a few or even one latent confounder. As a consequence, we apply overcomplete ICA on each complete maximal clique, and the number of latent confounders can be estimated by model selection, if needed.

\begin{figure}[htbp]
	\centering
	\includegraphics[width=0.8\textwidth]{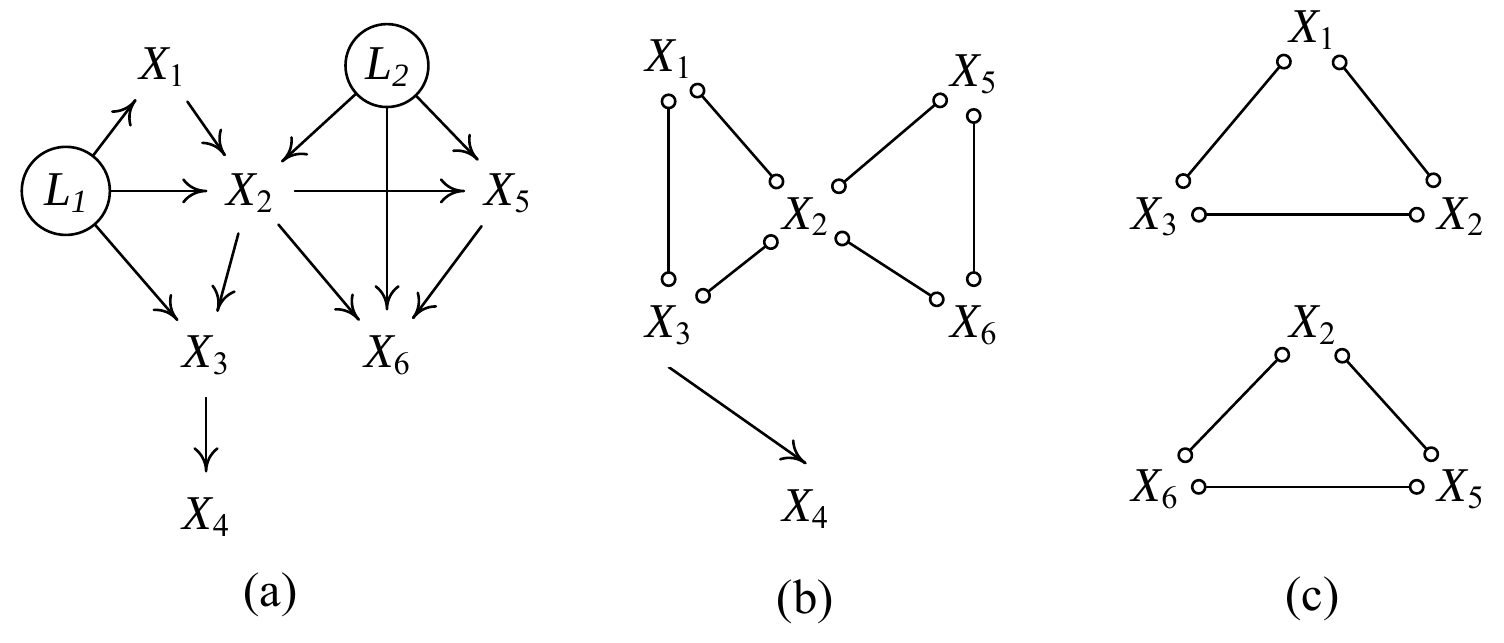}
	\caption{An example for stage IV: (a) true causal graph; (b) a PAG corresponding to (a), produced by stage III; and (c) two undetermined maximal cliques separated from (b).} \label{figure.ica}
\end{figure} 

Considering the example in Figure \ref{figure.ica}, we apply the first three stages of our method over the data generated by Figure \ref{figure.ica}(a) to obtain Figure \ref{figure.ica}(b). 
We then apply overcomplete ICA on the variables in the two maximal cliques (with undetermined edges), given in Figure \ref{figure.ica}(c),  separately, recovering the two local causal structures. 

\section{Discussion}
In this section, we show how the first three stages of our method make overcomplete ICA more accurate and efficient.

Consider a true causal structure among one of the graphs shown in Figure \ref{figure.f3} that satisfies $Triad(X_{j}, X_{k}\mid X_{i})$. In practice, exclusively using overcomplete ICA to discover the causal structure without applying stages I–III first, the algorithm needs to iterate all possible causal structures with permutation and scaling of columns of $\mathbf{A}$ to find the best model. Moreover, because the number of latent confounders $N_{L}$ is unknown, we need to test all cases based on all possible numbers of $N_{L}$. As the number of measured variables increases, more cases need to be computed, with a greater probability of falling into a local maximum. In contrast, stages I and II of our method return only the causal graph given in \ref{figure.stage3PAG} (d), but the use of Triad condition in Stage III determines that the causal structures must be among the graphs given in Figure \ref{figure.f3}.We need to perform overcomplete ICA only on these four possible causal structures, returning only the best one rather than estimating the causal structures in a large space with different numbers of latent confounders and possible causal graphs. In this case, the use of Triad condition greatly reduces the search space and identifies possible sub-structure informed by the previous result, as shown in Figure \ref{figure.stage3PAG} (a). 

Hence, we can determine the unconfounded structures by stage II, and further determine some sub-structures over three (or more) observed variables with latent confounders by stage III. These stages not only make the FCI result more informative, but can reduce the search space when using overcomplete ICA if needed. Further, our method is less prone to local optima and more efficient.

\section{Experiments}

In this section, we conduct simulation experiments and apply our method to real-world data to evaluate our method’s performance.

\subsection{Synthetic data}

We performed simulations as follows. We randomly generated causal structures over measured variables and latent confounders with different average $indegree =$ 0.5, 1, 1.5, 2, 2.5, 3, 3.5, which are the ratios of the number of indegree edges to the number of measured variables. Each causal structure had $10$ measured variables. In each generated graph, we randomly designed different ratios of latent confounders on the number of measured variables, $p = 0.1, 0.2, 0.3, 0.4, 0.5$. Data for these variables were not given to the search procedure. The maximum number of children for each latent confounder was $3$. Based on each causal structure, we generated the data according to LvLiNGAM, with the causal strength between different variables randomly chosen in the range of $(-1,-0.2] \cup [0.2,1)$ and the noise term for each variable randomly chosen from the uniform distribution on the interval $[-0.5,0.5]$. In addition, our generated data consisted of $1000$ samples in each set. For each setting, we repeated the algorithm 50 times, each time randomly generating a causal graph and coefficient, and then sampling a data set. 

In these experiments, we used the FCI Java implementation from the Tetrad \footnote{\href{www.phil.cmu.edu/tetrad/}{www.phil.cmu.edu/tetrad/}}for stage I of our method. Pseudo-code for FCI is described in Spirtes et al. \cite{spirtes2001causation}. For the regression and independence test, we used least squares regression to perform linear regressions and the kernel-based conditional independence (KCI) test \cite{zhang2011kernel} to conduct (conditional) independence tests between variables. Here we used 0.05 as the significance level for the independence test. In these experiments, we evaluated the performance of our method in terms of arrowheads among measured variables of recovered causal graphs and pairs of measure variables that were detected to be directly influenced bylatent confounders, by computing precision, recall, and F1 score. Precision is the percentage of correct causal edges between measured variables among all causal edges returned by the algorithm. Recall is the percentage of correct causal edges that are found by the search among true causal edges between measured variables. The F1 score is defined as
\begin{equation}
    F1 score = \frac{2 \times precision \times recall}{precision+recall}.
\end{equation}

To show the performance of different stages of our method, we also applied FCI (stage I), FRI (the combination of stages I and stage II), FRIT (the combination of first three stages), and FRITL (all stages) in the generated data sets. To show the performance of our framework, we used PairwiselvLiNGAM \cite{entner2010discovering} as the second phase of the framework, calling the new method FCI-pw. Besides, we utilized ParceLiNGAM \cite{tashiro2014parcelingam} that can be against the latent confounders as another compared method, and DirectLiNGAM \cite{shimizu2011directlingam} that assumes causal sufficiency to evaluate whether the performance of algorithm considering latent confounders is better.

Figure \ref{figure.arrowheads} gives the performance of our methods for the arrowheads. In this figure, the y-axis is precision, recall, or F1 score; a higher value means higher accuracy.

\begin{figure}[ht]
	\centering
	\includegraphics[width=0.98\textwidth]{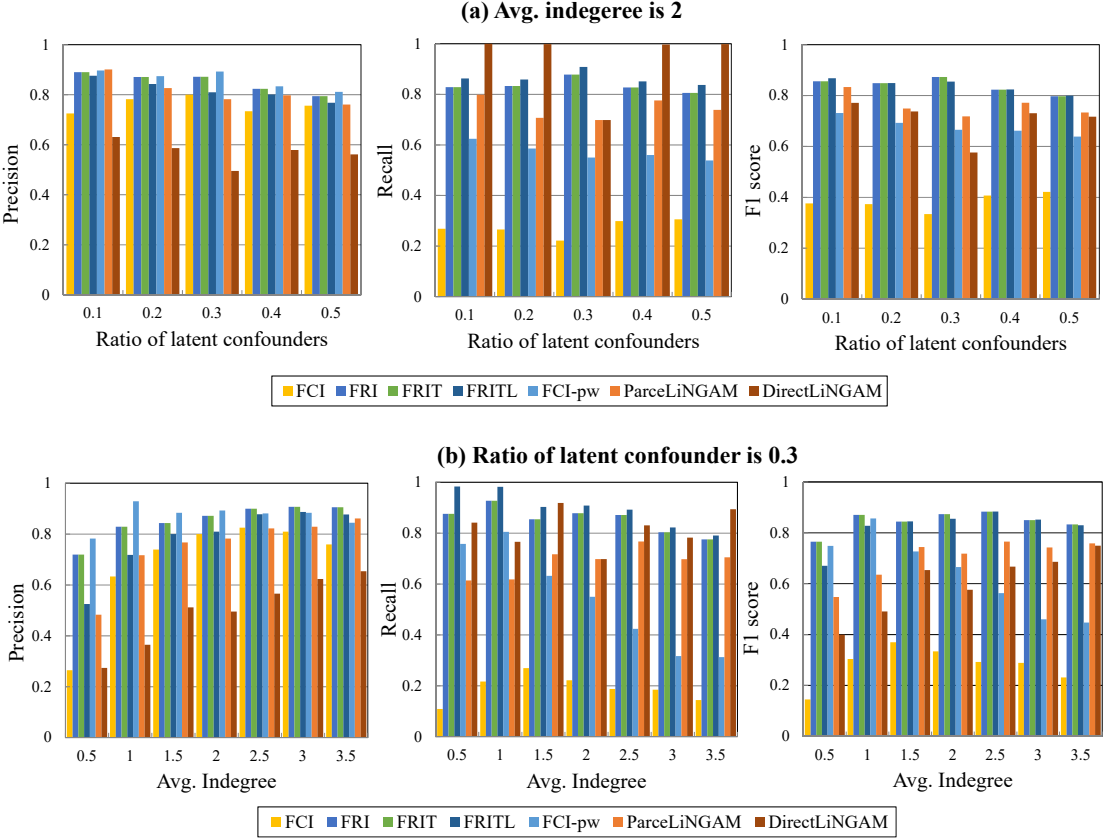}
	\caption{The evaluation of arrowheads among measured variables of recovered causal graphs.} \label{figure.arrowheads}
\end{figure}

\paragraph{Sensitivity of different settings in arrowheads among measured variables.}
From Figure \ref{figure.arrowheads}, we can see that the performance of FRI and FRIT are the same. This is because the stage III of our algorithm does not refine the causal directions between measured variables. The precision of FCI is mostly higher than 0.7, while the recall is smaller than 0.4,indicating that causal directions found by FCI are correct but with many undetermined causal edges remaining. Our method works better than FCI in both precision and recall, indicating the presence of more information. In all cases, FRI’s precision was higher than that of FRITL while FRI’s recall was lower than that of FRITL. This indicates that stage IV of our algorithm - with overcomplete ICA technique, correctly estimate some undetermined edges but introduce some redundant edges, which shows this technique is not reliable. The results of FRI and FCI-pw show nearly identical precision when the causal graph has a sparseness of 2, with FRI having higher recall. This shows that the causal directions found by pairwiselvLiNGAM are nearly correct, but most confounded edges, both latent and observed, could not be determined. Our method successfully finds confounded edges or indirect latent confounded edges by comparison. Compared with ParceLiNGAM, our method performs better, even when the ratio of the latent confounder is 5, further showing the effectiveness of our algorithm. Our method’s higher F1 score compared to DirectLiNGAM  reflects the importance of taking latent confounders into account. Figure \ref{figure.arrowheads} (a) shows our method to be mostly stable in the precision of arrowheads, which means stages II and III of our method are good for determining the causal direction of the PAG produced by FCI. In Figure \ref{figure.arrowheads} (b) shows that the recall of our method decreases as the graph density increases, with precision increasing. This means stages II and III of our method does not determine many causal edges between measured variables influenced by latent confounders. In summary, these results demonstrate the correctness and effectiveness of our method, with an ability to use a number of stages appropriate to the data if needed.

\begin{figure}[h]
	\centering
	\includegraphics[width=0.8\textwidth]{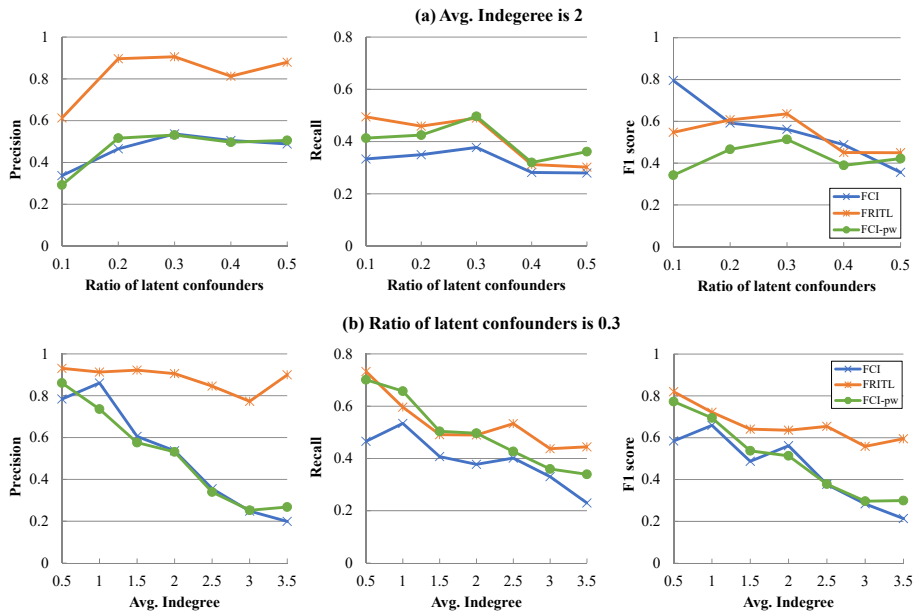}
	\caption{The evaluation of the pairs of measure variables detected to be directly influenced by latent confounders.} \label{figure.latent}
\end{figure}

\paragraph{Sensitivity of different settings in the pairs of measure variables that are detected to be directly influenced by latent confounders.} 
Our method also determines latent confounder influences which measured variables. Because ParceLiNGAM and DirectLiNGAM cannot do so, we only evaluate the performance of our method on the pairs of measure variables that are detected to be directly influenced by latent confounders, and compare them with the results produced by FCI and FCI-pw. As a reminder, FCI-pw only determines the unconfounded pairs of measured variables, so we treat the undetermined pair as directly influenced by the same latent confounder.As before, we evaluated the results according to the precision, recall, and F1 score. The results are given in Figure \ref{figure.latent}. Figure \ref{figure.latent}(a) shows that when the ratio of latent confounders is larger than $0.2$, the precision of our method is larger than $0.8$ whereas those that of FCI and FCI-pw are only around $0.6$. The recall of FCI-pw is higher than that of our method in most cases. This indicates FCI-pw does not find pairs not influenced by latent confounders, and the ones that our algorithm found are correct. Figure \ref{figure.latent}(b) shows that when the number of latent confounders increases, the precision of method’s recovery of the causal relationships between observed and latent variables lightly increases slightly increases whereas that of the results learned by FCI and FCI-pw decreases, which illustrates that our method finds more true latent confounders and recovers more causal relations between latent confounders and measured variables even in a dense graph. Although the recall of our algorithm is lower than that of FCI-pw when the average indegree of the causal graph is $1$ and $1.5$, the $F_1$ score of our algorithm is higher. All of these figures show that our methods correctly recover most measured variables that are directly influenced by latent confounders. 

\subsection{FMRI Task Data}
To test the performance of our method in a real problem, we applied our method to real functional magnetic resonance imaging (fMRI) task data previously published in \cite{ramsey2010six}. These data sets consist of 9 subjects that are judged whether a pair of visual stimuli rhymed or not. Data was acquired with a 3T scanner, with $TR = 2$ seconds, so the sample size for each subject is $160$ \cite{sanchez2019estimating}. Raw data is available at the OpenfMRI Project\footnote{\href{https://openfmri.org/dataset/ds000003/}{https://openfmri.org/dataset/ds000003/}}; and the preprocessed data used here is available\footnote{\href{https://github.com/cabal-cmu/Feedback-Discovery}{https://github.com/cabal-cmu/Feedback-Discovery}}. We use the preprocessed data in this experiment.

In these data sets, each subject has 9 variables, which are one input variable (Input) and eight regions of interest (ROIs). The Input variable is built by convolving the rhyming task boxcar model with a canonical hemodynamic response function. The eight ROIs are left and right occipital cortex (LOCC, ROCC); left and right anterior cingulate cortex (LACC, RACC); left and right inferior frontal gyrus (LIFG, RIFG); left and right inferior parietal (LIPL, RIPL). 

We applied our method and FCI on the concatenated data set for the Input variable and the eight regions of interest. This data set is combined by 1 repetition of 9 standardized individual data sets. Accordingly the sample size of this data set is $1440$. The threshold for the independence tests in this experiment is $0.01$. Figure \ref{figure.realG} gives the output graphs produced by FRITL and FCI, respectively.

\begin{figure}[htbp]
	\centering
	\includegraphics[width=0.5\textwidth]{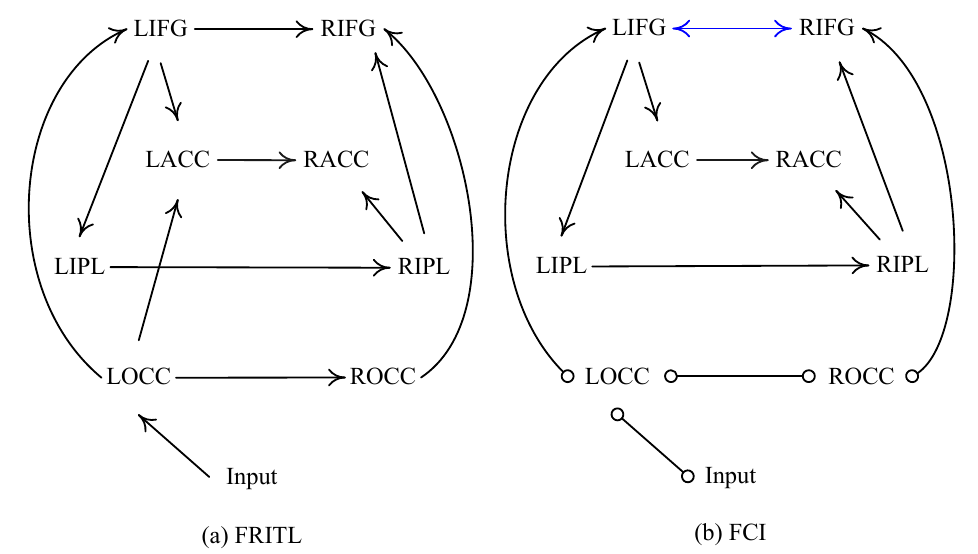}
	\caption{The output graphs produced by different methods on concatenated data set for eight bilateral regions of interest and one Input variable. The blue line denotes that there is a latent confounder for the adjacent measured variables.} \label{figure.realG}
\end{figure}

Compared fMRI task data with synthetic data, a shared common view is that stimulus input should go to the left occipital, feed upwards, and feed from left to right. Thus, the Input variables should be the exogenous variable in the true graph. From Figure \ref{figure.realG}, we can see that FRITL correctly outputs the edges from the Input variable to the region of interest, while FCI correctly outputs the adjacency on these edges but did not orient them. Figure \ref{figure.realG}(b) shows that there are four undetermined edges in the output of FCI, which are refined by our method (Figure \ref{figure.realG}(a)). It proves that the output of FCI is less informative than that of our method. From the results, we also show that the ROIs in the left hemisphere always be the causes of that in the right hemisphere, which is consistent with the common view.

\subsection{Sachs Data}
We also applied FRITL, FCI-pw, FCI, ParceLiNGAM, and DirectLiNGAM to Sachs data \cite{sachs2005causal}. Sachs data records many cellular protein concentrations in single cells. This data contains 9 files with varying interventions. In this experiment, we use these intervention knowledges so that the data contains 11 measured variables and 7466 samples. Figure \ref{figure.sachs} shows the results of FRITL, FCI, ParceLiNGAM, and DirectLiNGAM. We also visualize the ground-truth given in Figure \ref{figure.sachs} for evaluation. The results show that FCI-pw did not refine any edges from the FCI result, while FRITL reorientated two edges and located the latent confounders. ParceLiNGAM failed to find the causal order of 9 variables except for Mek and Raf, meaning it could not determine the causal relationships among most of the variables. it performs pruning, DirectLiNGAM still output still outputs many redundant causal edges. This is because the presence of latent confounder for two observed variables introduced false causal relationships among these observed variables. This demonstrates the importance of considering the existence of latent confounders. Comparing FRITL output and the ground truth, the causal graph estimated by our method does not contain the $Mek\to Erk$ edge, which is well-established.

\begin{figure}[ht]
	\centering
	\includegraphics[width=0.99\textwidth]{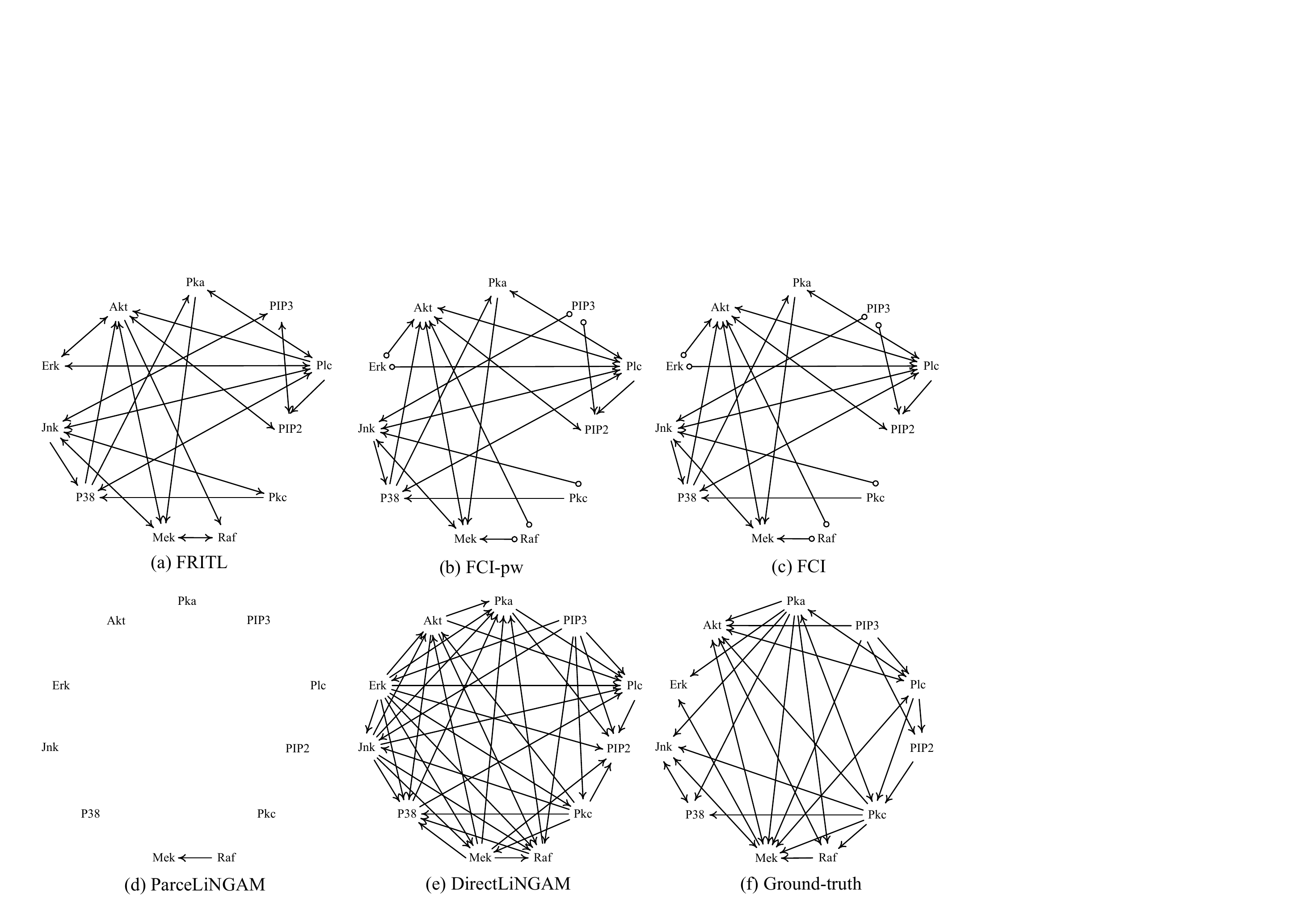}
	\caption{(a) - (e) The results of different methods applied in Sachs data. (f) The ground truth.} \label{figure.sachs}
\end{figure}

\section{Conclusions}

In this paper, we provide a hybrid method to reconstruct a causal graph of observed variables and latent confounders. In the proposed framework, we use FCI to decompose the global structure and use the independence noise condition, Triad condition, and overcomplete ICA to infer remaining local structures. The simulated experiments results show that FRITL is asymptotically correct and is more informative than FCI. In application to real functional magnetic resonance data and Sachs data, FRITL yields results in good agreement with neuropsychological opinion and precise agreement a causal relation known from the experimental design. In the future, we would like to generalize this framework to nonlinear cases.


\acks{Wei Chen would like to thank the CABAL group at Carnegie Mellon University for insightful discussion and suggestions. This research was supported in part by National Natural Science Foundation of China (61876043), Science and Technology Planning Project of Guangzhou (201902010058). Wei Chen would like to gratefully acknowledge the financial support from China Scholarship Council (CSC) and the Outstanding Young Scientific Research Talents International Cultivation Project Fund of Department of Education of Guangdong Province.}


\vskip 0.2in

\bibliography{FRITL}

\end{document}